\pgfplotsset{compat=1.17}
\title{%
Robust Out-of-Order Retrieval for Grid-Based Storage at Maximum Capacity
}
\author{
    Tzvika Geft,
    William Zhang,
    Jingjin Yu,
    Kostas Bekris
}
\definecolor{gray}{rgb}{0.35,0.35,0.35}
\definecolor{blue}{rgb}{0,0,1}
\definecolor{red}{rgb}{1,0,0}
\definecolor{orange}{rgb}{0.75, 0.4, 0}
\definecolor{green}{rgb}{0.0, 0.5, 0.0}
\newtheorem*{proposition*}{proposition}
\newcommand{\ignore}[1]{}
 \def\A{\mathcal{A}}
\newcommand{\Cpp}{C\raise.08ex\hbox{\tt ++}\xspace}
\newtheorem{problem}{Problem} 
\newtheorem*{problem*}{Problem} 
\newtheorem{theorem}{Theorem}
\newtheorem{corollary}{Corollary}
\newtheorem{prop}{Proposition}
\theoremstyle{definition}
\newtheorem{definition}{Definition}
\newtheorem{remark}{Remark}
\theoremstyle{plain}
\newtheorem{observation}{Observation}
\def\0{\bm{0}}
\def\thmhead@plain#1#2#3{%
  \thmname{#1}\thmnumber{\@ifnotempty{#1}{ }\@upn{#2}}%
  \thmnote{ {\the\thm@notefont#3}}}
\let\thmhead\thmhead@plain
\newcommand{\prevalg}{BaseS\xspace} 
\newcommand{\imps}{\textbf{RobustS}\xspace} 
\newcommand{\impr}{\textbf{ImpR}\xspace} 
\newcommand{\stormr}{{\tt StoRMR}\xspace}
\newcommand{\rstormr}{{\tt R-StoRMR}\xspace}
\newcommand{\storage}{\ensuremath{W}\xspace}
\newcommand{\cols}{\ensuremath{c}\xspace}
\newcommand{\rows}{\ensuremath{r}\xspace}
\newcommand{\nobjs}{\ensuremath{n}\xspace}
\newcommand{\arr}{\ensuremath{\A}\xspace}
\newcommand{\inc}{\ensuremath{A}\xspace}
\newcommand{\dep}{\ensuremath{D}\xspace}
\begin{document}

\maketitle

\begin{figure*}[ht]
    \centering   
    \begin{overpic}
    [width=.95\linewidth]{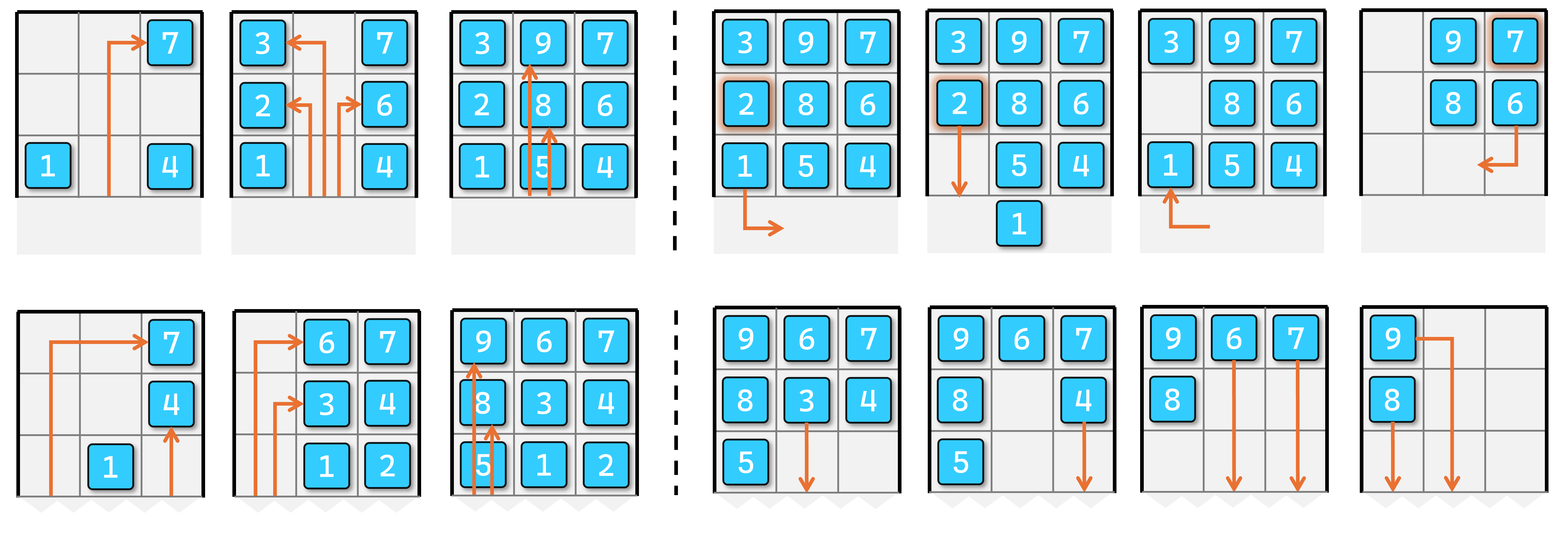}
   \put(6, 16){{\small ($a$)}}
    \put(20, 16){{\small ($b$)}}
    \put(33.6, 16){{\small ($c$)}}
    \put(50, 16){{\small ($d$)}}
    \put(64, 16){{\small ($e$)}}
    \put(77.4, 16){{\small ($f$)}}
    \put(91.6, 16){{\small ($g$)}}
    \put(6, 0.1){{\small ($h$)}}
    \put(20, 0.1){{\small ($i$)}}
    \put(33.6, 0.1){{\small ($j$)}}
    \put(50, 0.1){{\small ($k$)}}
    \put(64, 0.1){{\small ($\ell$)}}
    \put(77.4, 0.1){{\small ($m$)}}
    \put(91.6, 0.1){{\small ($n$)}}
    \end{overpic}
    \caption{
    Consider a $3 \times 3$ storage area $\storage$ accessible only from one side (bottom) that must store 9 loads arriving in the order $\inc = (4,1,7,6,3,2,9,8,5)$ and planned to depart in the order $\dep =(1,2,\ldots,9)$.
    \textbf{Top row:} a solution that avoids relocations if $\dep$ does not change.
    (a) The first three arriving loads, $4,1,7$, are stored.  
    (b) Loads $6,3,2$ are stored.  
    (c) Loads $9,8,5$ are stored.  
    At this point, while the loads can depart according to $\dep$ (not shown), the actual retrieval sequence becomes $\tilde{D} = (2,1,3,5,4,7,6,9,8)$, a slight perturbation of $\dep$, requiring relocations:
    (d) Load 2 is blocked, so 1 is relocated outside $\storage$.  
    (e) Load 2 is retrieved.  
    (f) Load 1 is stored back until it is needed.  
    (g) After $1,3,5,4$ are retrieved, load 7 is next but is blocked by 6, which is relocated within $\storage$.
    \textbf{Bottom row:}
    (h)–(j) Loads are stored according to $\inc$ but in a robust arrangement, as proposed in this work.  
    (k)–(n) In this solution, loads can be retrieved without relocations, not only according to $\dep$, but also under $\tilde{D}$.
}
    \label{fig:intro_ex} 
\end{figure*}

\begin{abstract}
This paper proposes a framework for improving the operational efficiency of automated storage systems under uncertainty.
It considers a 2D grid-based storage for uniform-sized \emph{loads} (e.g., containers, pallets, or totes), which are moved by a robot (or other manipulator) along a collision-free path in the grid. The loads are labeled (i.e., unique) and must be stored in a given sequence, and later be retrieved in a different sequence---an operational pattern that arises in logistics applications, such as last-mile distribution centers and shipyards. The objective is to minimize the load relocations to ensure efficient retrieval. A previous result guarantees a zero-relocation solution for known storage and retrieval sequences, even for storage at full capacity, provided that the side of the grid through which loads are stored/retrieved is at least 3 cells wide. However, in practice, the retrieval sequence can change after the storage phase.
To address such uncertainty, this work investigates \emph{$k$-bounded perturbations} during retrieval, under which any two loads may depart out of order if they are originally at most $k$ positions apart.
We prove that a $\Theta(k)$ grid width is necessary and sufficient for eliminating relocations at maximum capacity.
We also provide an efficient solver for computing a storage arrangement that is robust to such perturbations.
To address the higher-uncertainty case where perturbations exceed $k$, a strategy is introduced to effectively minimize relocations.
Extensive experiments show that, for $k$ up to half the grid width, the proposed storage-retrieval framework essentially eliminates relocations.
For $k$ values up to the full grid width, relocations are reduced by $50\%+$.

\end{abstract}

\section{Introduction}
Modern logistics systems increasingly rely on automation technologies for transporting uniform-sized \emph{loads}, such as containers, pallets, and totes.
Operations at some logistics hubs, especially last-mile and small-scale distribution centers, occur in two distinct phases: first, the storage of incoming loads (such as when a delivery semi-truck arrives), and later, their retrieval for onward transport (such as last-mile, local delivery trucks).  Such scenarios arise in container terminals~\cite{auto-port}, cross-docking facilities~\cite{cross-dock}, automated warehouses with fleets of mobile robots~\cite{wurman2008coordinating} and Automated Storage and Retrieval Systems (AS/RS)~\cite{asrs-survey, yalcin2017multi, carton-seq}.
A central challenge these systems must contend with is trading off between maximizing space utilization and storage/retrieval efficiency, as denser storage necessarily makes arbitrary load access more difficult.

This paper investigates this trade-off in high-density, 2D grid-based storage, akin to Puzzle-Based Storage (PBS)~\cite{gue2007puzzle}.  %
In this setting, each cell of a rectangular $\rows\times\cols$ grid can hold one load, which can be moved by a mobile robot or manipulator along cardinal directions via empty cells. Assuming the grid is accessible for storage and retrieval from only one side,
the following three types of actions are available: \textit{(i)} \emph{storage} of an arriving load, \textit{(ii)} \emph{retrieval} of a departing load, or \textit{(iii) }\emph{relocation (rearrangement)} of a load from one cell to another.
Loads must be stored in a given known order \inc and must then be retrieved according to an \emph{anticipated} retrieval order \dep, which might change, while minimizing relocations.
See \Cref{fig:intro_ex}; a formal problem definition follows below.

Assuming full prior knowledge of the storage and retrieval sequence, prior work shows that rearrangements can be avoided, even when the grid is to be fully occupied, provided that the grid’s open, access side is at least 3 columns wide~\cite{rss25}.
While this result eliminates the aforementioned trade-off under full order observability, in practice, the complete load sequence is generally unavailable due to operational uncertainties. This paper instead asks: \emph{can rearrangements be eliminated or minimized under significant storage/retrieval sequence uncertainty?}
This work provides a positive answer through a two-part framework that combines robust storage and effective retrieval, thereby broadening the applicability of high-density grid-based storage under uncertainty and offering design guidelines.

\noindent\textbf{Contribution.}
This work introduces a novel storage and retrieval problem variant that incorporates uncertainty through \emph{$k$-bounded perturbations}, under which any two loads can depart (or, interchangeably, arrive) out of order if they are planned to depart at most $k$ positions apart.
To solve the problem, two complementary approaches are presented:

\textbf{Robust storage.}
 We generalize deterministic zero-relocation solution conditions to handle uncertainty through \emph{$k$-robust} storage arrangements, which solve the problem with no relocations under $k$-bounded perturbations.
Given $k$, we provide asymptotically tight bounds showing that $\Theta(k)$ columns are necessary and sufficient to find a $k$-robust arrangement.
We also develop a fast solver that finds robust arrangements for $k$ values in line with our bounds.

\textbf{Retrieval strategy.}
As rearrangements may be required as $k$ increases, a load relocation problem arises:
Given a target load for retrieval that is blocked by other loads, compute relocation actions for the blocking loads with the goal of minimizing future relocations.
We propose a greedy approach that prioritizes relocations within the storage area while accounting for future retrievals.

Comprehensive experiments for storage at full capacity show that the two approaches combined significantly outperform baselines, reducing the number of relocations by up to 60-70\%, the usage of a buffer row outside of the storage area, and the distance traveled by loads, while maintaining computational efficiency %
as the grid size grows.

\section{Related Work}  \label{sec:related}
Many works study either high-density grid-based storage or ordered storage/retrieval problems involving a sequence of loads. Their intersection, however, has received less attention.
The puzzle-based storage (PBS) model addresses layout and retrieval for 2D grids with only one or few empty \emph{escort} cells that enable motion~\cite{gue2007puzzle, kota2015retrieval, bukchin2022comprehensive}.
Retrieval problems in PBS focus on one or a few target loads at time, without considering a complete retrieval sequence.
In settings where multiple loads are to be retrieved, the retrieval order is freely chosen~\cite{Mirzaei2017, He2023}.

Storage and retrieval with a given sequence has been studied for train-yards~\cite{train-ordering} and for stack-based ship containers.
In the Block Relocation Problem (BRP)~\cite{Caserta2012} uniform loads are stored in vertical stacks where only the top load in a stack is accessible.
BRP asks to retrieve all the loads in a given order while minimizing relocations of loads between stacks.
In the spirit of this work, the BRP family addresses deviations from a planned storage/retrieval sequence.
\cite{boge2020robust} treat the retrieval priorities (where each priority refers to a batch of retrieved loads) as uncertain: the realized order may differ from the planned one by at most~$\Gamma$ pairwise inversions. %
The Stochastic Container Relocation Problem \cite{galle2018scrp} instead reveals the retrieval sequence in batches, with the order unknown within each batch.
\cite{boschma2023adp} models uncertainty for both storage and retrieval, applying approximate dynamic programming.

Nevertheless, the ordered retrieval of loads in grid settings with relocation minimization remains nascent.
A recent early effort~\cite{ICCL24} applies BRP techniques, assigning a removal direction to each load, which requires many empty aisles. %
Multi-Agent Path Finding (MAPF)~\cite{DBLP:conf/socs/SternSFK0WLA0KB19} addresses motion planning for robots in grids.
MAPF works for warehouses~\cite{li2021lifelong} typically assume aisles to increase throughput (sacrificing storage space), while we focus on maximizing capacity with sequential motion.

\section{Problem Definition} \label{sec:prob-def}
Consider a rectangular $\rows \times \cols$ grid storage area \storage with \rows rows and \cols columns. The bottom (front) row of \storage is the open side of \storage through which loads are stored/retrieved. 
Denote the columns by $C_1, \ldots, C_{\cols}$ in a left-to-right order.
The loads have distinct labels $1,\ldots, \nobjs \le \rows\cols$.
The \emph{density} of a storage space having $\nobjs$ loads is $\nobjs/(\rows \cols)$.
Unless otherwise stated, assume full capacity storage, i.e., $n = rc$.
Each load occupies exactly one grid cell.
An {\em arrangement} \arr of a set of loads is an injective mapping of loads to grid cells,
i.e., an arrangement specifies a distinct (row, column) pair for each load.
Two loads are \emph{adjacent} in an arrangement if they are located in horizontally or vertically adjacent grid cells.

\textbf{I/O row.}
We assume there is an Input/Output (I/O) row adjacent to the front row of \storage; loads appear/disappear on the I/O before/after storage.
The I/O row also serves as a temporary \emph{buffer} that can be used when relocating loads, e.g., to facilitate access to a load.
The I/O row is not used for storage.
Denote this row as $IO$ and $\storage^+ = W \bigcup IO$.

\textbf{Load movement.}
Each load can be moved by a robot via a path of empty cells along the four cardinal directions (up, down, left, or right).
To pick up a specific target load, the robot must reach the cell occupied by the target load. %
The following types of \emph{actions} are valid:
\begin{itemize}[leftmargin=3mm]
    \item Storage: A load can be \emph{stored} in an empty cell $v \in \storage$ via a path (of empty cells) from any cell $u \in IO$ to $v$ (i.e., the load to be stored appears on $u$).

    \item Retrieval: A load at cell $v\in \storage$ can be \emph{retrieved} from \storage via a path (of empty cells) to any cell $u \in IO$.

    \item Relocation: A load can be \emph{relocated} within $\storage^+$ to an empty cell via a path (of empty cells).
\end{itemize}

Denote the arrival sequence, i.e., the order in which loads arrive so as to be stored, by ${\inc = (a_1, \ldots, a_{\nobjs})}$.
Without uncertainty, the departure sequence, i.e., the order in which loads are to be retrieved, is fixed to be ${\dep = (1, \ldots, \nobjs)}$ without loss of generality, as loads can always be relabeled.

We now introduce the notion of \emph{bounded perturbation} to model the uncertainty of arrival/departure sequences. 
\begin{definition}
Let $\tilde{S}$ be a permutation of sequence ${S=(s_1, \dots, s_n)}$.
We say that $\tilde{S}$ is a \emph{$k$-bounded perturbation} of $S$ if for every pair of elements $s_i, s_j$ with $i < j$ in the reversed order in $\tilde{S}$ (i.e., $s_j$ appears before $s_i$): ${j-i \le k}$.

\noindent \textbf{Example:} A 2-bounded perturbation of ${S_5 = (1,2,3,4,5)}$ is ${(2, 3, 1, 5, 4)}$.
In this example the inverted pairs are $(1,3)$, $(1,2)$, and $(4, 5)$.
The sequence $(1,3,5,2,4)$, however, is \emph{not} a 2-bounded perturbation of $S_5$ since it contains the inverted pair $(2,5)$ whose elements appear too far apart in $S_5$.
\end{definition}

\begin{problem}[\emph{Robust} Storage and Retrieval with Minimum Relocations (\rstormr)]
Given a storage area $\storage$ with $\rows$ rows and $\cols$ columns, arrival and departure sequences $\inc$ and $\dep$, and an integer $k > 0$, find a minimum-length sequence of actions that stores all loads according to $\inc$ and then retrieves them according to a sequence $\tilde{\dep}$, which is a $k$-bounded perturbation of $\dep$.
$\tilde{\dep}$ is revealed one load at a time during the retrieval phase and is not known during storage.
\end{problem}

\begin{remark}
Since we fix $\dep = [n] \coloneqq (1, \ldots, \nobjs)$, one may drop \dep from the input. Refer to $A$ as the input.%
\end{remark}

\noindent \textbf{Additional objectives.}
In addition to relocation actions, we consider two additional minimization objectives:
\emph{(i)} \textbf{I/O row usage}, defined as the number of actions in which a load is present on the I/O row at the start of the action.
This metric approximates the time during which the I/O row is occupied by a static load.
This time matters in practical storage systems as the I/O row may be needed for other transport operations besides providing access to \storage. 
\emph{(ii)} The \textbf{total distance} traveled by the loads throughout the sequence of actions. 

\section{Analysis of Solutions Without Relocations}
A natural goal is characterizing the conditions for solving \rstormr without relocations.
We show that $\Theta(k)$ columns are necessary and sufficient.
Interestingly, the bounds hold for any number of rows $r>1$, i.e., the number of columns is the key parameter that governs zero-relocation solutions under uncertainty. %

\subsection{Preliminary analysis}
We recall useful results for relocation-free solutions to \stormr (i.e., when $k=0$)~\cite{rss25}.

\begin{definition}
An arrangement \arr \emph{satisfies} an arrival (resp. departure) ordering \inc  (resp. \dep) if all loads can be stored (resp. retrieved) in the order specified by \inc (resp. \dep) with one action per load with \arr as the final (resp. initial) arrangement.
\end{definition}

\noindent View the problem from the following reverse perspective:

\begin{observation} \label{obs:reverse}
An arrangement \arr satisfies an arrival order \inc iff \arr satisfies the departure order where \inc is reversed.
\end{observation}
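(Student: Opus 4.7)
The plan is to exploit the natural time-reversal symmetry between storage and retrieval actions: a storage action moves a load from the I/O row to a grid cell along a path of empty cells, while a retrieval action moves a load from a grid cell to the I/O row along a path of empty cells. These are inverses of one another when traversed in the opposite temporal direction. I would set up a bijection between storage schedules (with one action per load) that end at $\arr$ and retrieval schedules that start at $\arr$, and show that this bijection sends the load ordering $\inc = (a_1, \ldots, a_n)$ to its reverse $(a_n, \ldots, a_1)$.

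Concretely, I would prove the ``only if'' direction and then invoke symmetry. Suppose $\arr$ satisfies $\inc$, witnessed by a sequence of storage actions where, at step $i$, load $a_i$ appears at some I/O cell $u_i$ and travels via a path $P_i$ of empty cells to its target cell $v_i = \arr(a_i)$. Just before that step, the occupied cells of $\storage$ are exactly $\{v_1, \ldots, v_{i-1}\}$, so $P_i$ avoids this set. Now define the candidate retrieval schedule: at its $j$-th step, retrieve load $a_{n-j+1}$ from $v_{n-j+1}$ along the reverse of $P_{n-j+1}$ to the I/O cell $u_{n-j+1}$. Just before this retrieval, the loads still present in $\storage$ occupy exactly $\{v_1, \ldots, v_{n-j+1}\}$, which matches the set of occupied cells just before the corresponding forward storage step. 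Hence the reversed path $P_{n-j+1}$ is clear except at its starting endpoint, so the retrieval action is valid. This shows that $\arr$ satisfies the reverse of $\inc$. The converse follows by applying the same argument to the retrieval schedule, since the definitions of storage and retrieval actions are symmetric.

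The only real technical point is verifying that the path-validity condition is preserved under reversal, which boils down to the bookkeeping observation that the set of occupied cells of $\storage$ at the instant just before storing $a_i$ in the forward schedule equals the set of occupied cells just after retrieving $a_i$ in the reversed schedule. Because the I/O row is assumed empty between relocation-free actions and the statement concerns schedules with one action per load (no relocations), no further complication from buffer usage arises, so I do not anticipate any substantive obstacle beyond stating this bijection cleanly.
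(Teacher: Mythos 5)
Your proposal is correct: the paper states this as an unproved observation (inherited from the prior work it cites), and the time-reversal bijection you describe---reversing the order of the one-action-per-load schedule and reversing each path, with the bookkeeping that the occupied set before storing $a_i$ equals the occupied set of the remaining loads before retrieving $a_i$---is exactly the intended justification. No gap; the symmetry between the storage and retrieval action definitions makes the converse direction immediate, as you note.
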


Following \Cref{obs:reverse}, it suffices to treat \stormr as the  problem of finding an arrangement \arr that satisfies two departure orders, the true departure order $[n]$ and a permutation $A^R$ of $[n]$, where $A^R$ is the reverse of $A$.
For example, given $A = (4,1,7,6,3,2,9,8,5)$ and $D = (1, 2, \ldots, 9)$ as in \Cref{fig:intro_ex}, the two departure orders to be satisfied are the original $D$ and $A^R = (5, 8, 9, 2, 3, 6, 7, 1, 4)$.

Deciding whether \arr satisfies a departure order $\dep$, amounts to checking these \emph{adjacency conditions}:

\begin{observation} \label{prop:local-orig}
    An arrangement \arr satisfies a departure order $\dep=(d_1, \ldots, d_{\nobjs})$ iff every load $d_i$ is either in the bottom row or is adjacent in \arr to a load $d_j$ that departs earlier, i.e., $j < i$.
\end{observation}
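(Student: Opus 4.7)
The plan is to prove the two directions separately, using the natural \emph{time-of-retrieval} viewpoint: immediately before the $i$-th retrieval, exactly the cells originally occupied by $d_1, \ldots, d_{i-1}$ are empty, while the cells of $d_i, \ldots, d_n$ still carry their loads according to $\arr$. The $IO$ row may be assumed empty at the start of each action, since loads appear/disappear there between actions and $IO$ is not used for storage.

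For the necessity direction ($\Rightarrow$), I would fix an arbitrary $d_i$ that is \emph{not} in the bottom row and examine the single retrieval action that transports it from $\arr(d_i)$ to some $u \in IO$. Because $d_i$ is not in the bottom row, its cell is not adjacent to $IO$, so the retrieval path must take at least one step inside $\storage$ before leaving. Let $v$ be the first such step. Then $v$ is adjacent to $\arr(d_i)$, lies in $\storage$, and is empty at this moment. Since only the cells of $d_1, \ldots, d_{i-1}$ are empty inside $\storage$, the cell $v$ must host some $d_j$ with $j < i$ in the arrangement $\arr$; this is exactly the adjacency condition.

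For the sufficiency direction ($\Leftarrow$), I would proceed by induction on $i$ and construct the retrieval path explicitly. The adjacency condition defines, for every non-bottom load $d_i$, a predecessor $d_{p(i)}$ adjacent to $d_i$ in $\arr$ with $p(i) < i$. Iterating $p$ yields a strictly decreasing sequence of indices, which therefore terminates at some bottom-row load $d_{i_0}$. This produces a chain of pairwise-adjacent cells $\arr(d_i), \arr(d_{p(i)}), \ldots, \arr(d_{i_0})$, all with index $< i$; by the inductive hypothesis each of them is empty at the moment $d_i$ is being retrieved. From $\arr(d_{i_0})$ a single step reaches an empty cell of $IO$, giving a valid empty-cell path for the one-action retrieval of $d_i$.

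I expect the only mildly subtle step to be the termination of the iterated predecessor map at a bottom-row load; beyond that, both directions reduce to tracking which cells are empty at which retrieval time. One detail worth stating carefully is that the chain constructed in the sufficiency direction is simple by strict monotonicity of indices, so it is a legitimate path rather than a walk.
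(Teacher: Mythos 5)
The paper states this observation without proof (it is recalled from prior work), so there is no in-paper argument to compare against; your proof is correct and is the natural argument one would supply. Both directions are sound: the necessity direction correctly identifies that the first step of a non-bottom-row load's retrieval path must land on an empty in-grid cell, which under the paper's standing full-capacity assumption ($n = rc$) can only be the vacated cell of an earlier-departing load; the sufficiency direction's iterated-predecessor chain terminates at a bottom-row load by strict index decrease, and injectivity of $\arr$ makes the chain a simple path. One point worth making explicit: the "only if" direction genuinely requires full capacity (with $n < rc$ a load could escape through a never-occupied cell without satisfying the adjacency condition), whereas the "if" direction holds for any $n$ --- you invoke this assumption correctly, and it is the right reading of the paper's setup.
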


One can verify that the adjacency conditions hold for each of $\dep$
and  $\inc^R$ in the arrangements of \Cref{fig:intro_ex}, e.g., consider load 2 in (c); 2 is adjacent to 1 (as needed for $\dep$) and also adjacent to 8 (as needed for $\inc^R$). A key result for \stormr is an algorithm \emph{\prevalg} (baseline storage), that finds a zero-relocation solution given $c \ge 3$~\cite{rss25}:
\begin{theorem} \label{thm:3-col}
Let \storage  be a $\rows \times \cols$ storage area with $\cols \ge 3$ columns, and $\inc$ and $\dep$ be storage and retrieval sequences, respectively, for $n\le rc$ loads.
An arrangement \arr that satisfies both $\inc$ and $\dep$ can always be found in $O(n)$ time.
\end{theorem}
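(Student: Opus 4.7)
The plan is to prove Theorem 1 by exhibiting a constructive algorithm that fills \storage load by load. The first step is a reformulation: by \Cref{obs:reverse}, asking \arr to satisfy the arrival order $\inc$ and the departure order $\dep$ is equivalent to asking \arr to satisfy two \emph{departure} orders simultaneously, namely $\dep$ itself and $\inc^R$. By the adjacency characterization in \Cref{prop:local-orig}, this means every non-bottom-row load $\ell$ must have, in \arr, at least one grid-neighbor that is strictly earlier than $\ell$ in $\dep$, and at least one grid-neighbor (possibly the same, possibly different) that is strictly earlier than $\ell$ in $\inc^R$.

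Having made this reduction, I would design a deterministic placement procedure that processes loads in a canonical order (for instance, in increasing $\dep$-order, i.e., load $1,2,\ldots,n$) and assigns each load to a concrete cell of \storage. The role of the three columns is the crux of the argument: each interior cell has up to four grid-neighbors, and with $\cols \ge 3$ we can reserve adjacency slots to serve the two orders independently. Intuitively, the left and right columns can host loads that anchor the two growth structures (one for $\dep$, one for $\inc^R$) while the middle column acts as a bridge whose cells always admit neighbors on both sides, allowing a single placement to witness earlier neighbors for both orders at once. The placement rule would maintain a simple frontier of "currently extendable" cells and choose, for each incoming load, a frontier cell whose already-placed neighbors include one $\dep$-earlier load and one $\inc^R$-earlier load.

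Correctness would be established by an inductive invariant: after the first $i$ loads have been placed, the partial arrangement satisfies the adjacency conditions for both orders restricted to these $i$ loads, and the frontier contains at least one cell admitting a legal placement of the next load. The base case is the bottom row, where the conditions are vacuous. The inductive step reduces to a small case analysis showing that, whenever the next cell to be filled risks violating one of the two conditions, the availability of a third column provides an alternate routing that resolves the conflict. Once the construction is in place, an $O(n)$ runtime follows because ranks in $\dep$ and $\inc^R$ are precomputed in linear time and each placement inspects only a constant number of candidate cells.

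The main obstacle is precisely this inductive invariant: proving that the greedy placement never gets stuck, no matter how adversarial the pair $(\inc,\dep)$ is. This is where the threshold $\cols \ge 3$ is used essentially, and it is worth complementing the argument with a sharpness check exhibiting a $2$-column instance on which no zero-relocation arrangement exists, thereby explaining why three columns are both necessary and sufficient.
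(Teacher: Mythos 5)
Your opening reduction is exactly the right one and matches how the paper frames the problem: by \Cref{obs:reverse} and \Cref{prop:local-orig}, it suffices to find an arrangement in which every load not on the bottom row has a neighbor that is earlier in $\dep$ and a neighbor that is earlier in $\inc^R$. (Note that the paper itself does not prove \Cref{thm:3-col}; it imports it from prior work, so the only material to compare against is that setup, which you reproduce correctly.)

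However, everything after the reduction is a plan rather than a proof, and the gap sits precisely at the theorem's content. You never specify the placement rule: which cell the $i$-th load actually goes to, what the ``frontier'' is, or what the ``alternate routing'' through the third column concretely does. The inductive invariant --- that the greedy placement never gets stuck for an adversarial pair $(\inc,\dep)$ --- is the entire difficulty, and you explicitly defer it (``The main obstacle is precisely this inductive invariant''). Without it there is no argument that three columns suffice; indeed, the same prose would read identically with ``$\cols\ge 3$'' replaced by ``$\cols\ge 2$,'' which signals that the width hypothesis is never actually used. A complete proof needs an explicit construction (for instance, one column carrying a bottom-up chain ordered by $\inc^R$, an adjacent column carrying a chain ordered by $\dep$, with horizontal adjacencies supplying the cross conditions --- the shape that \Cref{alg:storage} generalizes for $k>0$) together with a case analysis showing a valid pair of loads can always be placed. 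The claimed $O(n)$ bound also depends on that missing rule inspecting only $O(1)$ cells per load, so it too is unsubstantiated as written. Finally, the ``sharpness check'' for two columns is not part of the statement and, even if desirable, is likewise only asserted.
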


Robustness to perturbations, however, requires revisiting the adjacency conditions for \rstormr.

\begin{definition}
Given a departure/arrival sequence $S$, an arrangement \arr is \emph{$k$-robust for $S$} (or just \emph{$k$-robust} if $S$ is obvious) if \arr satisfies every $k$-bounded perturbation $\tilde{S}$ of $S$. 
\end{definition}
\begin{prop}[\textbf{(Robust adjacency conditions)}]
\label{prop:local-pert}
Let $\dep=(d_1,\dots,d_{\nobjs})$ be a departure sequence with $n=rc$ and let $k \ge 0$.  
An arrangement \arr is $k$-robust for $\dep$ if and only if every load $d_i$ is either in the bottom row or is adjacent in \arr to a load $d_j$ that appears in $\dep$ at least $k+1$ loads earlier in $\dep$, i.e., $i-j \ge k+1$.
\end{prop}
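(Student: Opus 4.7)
The plan is to reduce $k$-robustness to the deterministic adjacency characterization of \Cref{prop:local-orig} applied to every admissible perturbation. The conceptual bridge is the following pairing principle, immediate from the definition of a $k$-bounded perturbation: a pair $(d_j, d_i)$ with $j < i$ is inverted in some $k$-bounded perturbation of $\dep$ if and only if $i - j \le k$.

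For the sufficient direction ($\Leftarrow$), I would fix any $k$-bounded perturbation $\tilde{\dep}$ and any non-bottom-row load $d_i$. The neighbor $d_j$ guaranteed by the hypothesis satisfies $i - j \ge k + 1$, so by the pairing principle, $(d_j, d_i)$ cannot be inverted in $\tilde{\dep}$. Thus $d_j$ still precedes $d_i$ in $\tilde{\dep}$ and supplies the earlier-departing neighbor required by \Cref{prop:local-orig}. Since this holds for every non-bottom-row load, $\arr$ satisfies $\tilde{\dep}$, and hence $\arr$ is $k$-robust.

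For the necessary direction ($\Rightarrow$), I would prove the contrapositive: assume some non-bottom-row $d_i$ has no $\arr$-neighbor $d_j$ with $i - j \ge k+1$, and exhibit a $k$-bounded perturbation $\tilde{\dep}$ that $\arr$ fails to satisfy. Let $j_{\min}$ be the smallest index among the neighbors $d_j$ of $d_i$ with $j < i$; if no such neighbor exists, then $\arr$ already fails $\dep$ itself (a trivial $k$-bounded perturbation), so assume $j_{\min}$ is defined. Construct $\tilde{\dep}$ by relocating $d_i$ leftward to position $j_{\min}$ and shifting $d_{j_{\min}}, \ldots, d_{i-1}$ one step to the right. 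When $d_i$'s turn arrives in $\tilde{\dep}$, the loads already retrieved are exactly $d_1, \ldots, d_{j_{\min}-1}$, none of which are neighbors of $d_i$ by the choice of $j_{\min}$. Since $d_i$ is not in the bottom row, \Cref{prop:local-orig} is violated for $\tilde{\dep}$.

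The main obstacle is verifying that the constructed $\tilde{\dep}$ is genuinely a $k$-bounded perturbation. Because the construction only introduces inversions of the form $(d_j, d_i)$ with $j_{\min} \le j \le i - 1$, the bound reduces to the single inequality $i - j_{\min} \le k$, which holds by hypothesis since $d_{j_{\min}}$ is itself a neighbor witnessing the failure of the robust adjacency condition. No other inversions are created, so no global tracking of pairwise distances is needed.
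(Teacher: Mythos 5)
Your proof is correct and follows essentially the same route as the paper's: the sufficiency direction uses the same observation that a pair $(d_j,d_i)$ with $i-j\ge k+1$ cannot be inverted in any $k$-bounded perturbation, and the necessity direction builds the same kind of adversarial perturbation that pulls $d_i$ forward past all of its earlier-departing neighbors and then invokes \Cref{prop:local-orig}. The only cosmetic differences are that you argue by contrapositive rather than contradiction and move $d_i$ to position $j_{\min}$ rather than to the position immediately after $d_{i-(k+1)}$; both choices yield a valid $k$-bounded perturbation witnessing the failure.
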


\begin{proof}
Suppose \(\arr\) is \(k\)-robust but for some \(i\) the load \(d_i\)
is neither on the bottom row nor adjacent to any
\(d_j\) with \(j\le i-(k+1)\).  
Define a $k$-bounded perturbation $\dep'$ of \dep by taking load $d_i$ and inserting it right after load $d_{i-(k+1)}$, or placing $d_i$ as the first load if $i< k+1$.
That is, move $d_i$ as early as possible and push other loads back.
Since \(d_i\) has no neighbor among
\(\{d_1,\dots,d_{i-(k+1)}\}\), by \Cref{prop:local-orig},
\(\arr\) doesn't satisfy \(\dep'\), which contradicts \(k\)-robustness.

Now assume that the adjacency condition holds in \(\arr\), and let
\(\dep'\) be any \(k\)-bounded perturbation of \(\dep\). 
We claim that \Cref{prop:local-orig} applies to
\(\dep'\). 
Indeed, for each \(i\), if \(d_i\) is not on the front row it is adjacent to some \(d_j\) with \(j\le i-(k+1)\). 
No load can appear more than \(k\) spots earlier, so any load originally before position \(i-(k+1)\) still appears before \(d_i\) in \(\dep'\). 
Thus, the adjacency condition of \Cref{prop:local-orig} holds for \(d_i\).
Since this is the case for every \(d_i\), \(\arr\) satisfies
\(\dep'\).
As \(\dep'\) is an arbitrary perturbation, \(\arr\) is \(k\)-robust.
\end{proof}

\begin{figure}[tb]    
\centering
\includegraphics[width=.85\columnwidth]
{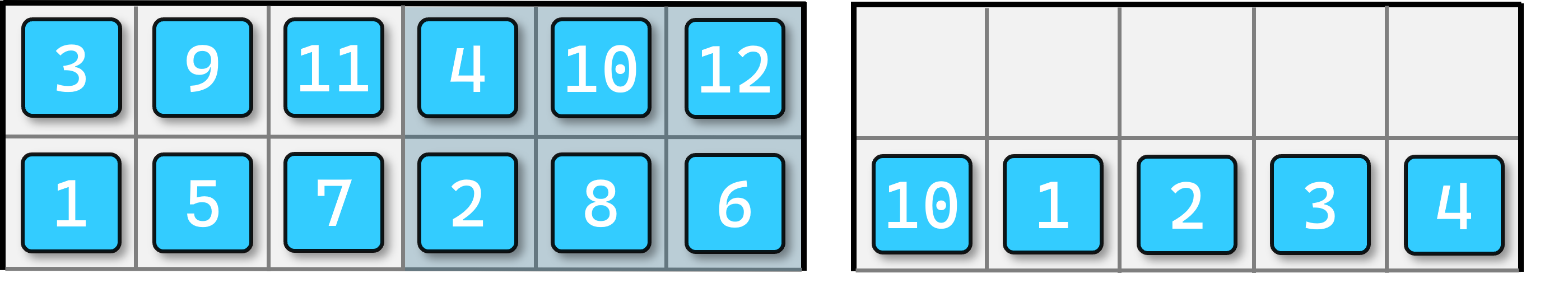}
\caption{Examples of the column bounds for $k = 1$.
\textbf{Upper bound} (left):
Consider a $2 \times 6$ grid with $\inc = (7, 3, 11, 1, 9, 4, 6, 12, 2, 10, 8, 5)$. Per \Cref{thm:UB}, we partition $\inc$ into subsequences of odd and even loads: $A_0 = (7, 3, 11, 1, 9, 5)$ and $A_1 = (4, 6, 12, 2, 10, 8)$.  
We treat $A_0$ as a \stormr instance on a $2 \times 3$ grid with $D = (1, 3, 5, 7, 9, 11)$ and use the solution to fill the leftmost 3 columns. Similarly, we treat $A_1$ as a separate instance for the rightmost 3 columns (shaded). The combined solution is a robust arrangement.
\textbf{Lower bound} (right):
Consider $A^R$ starting with 10, 3, 4. As $k = 1$, we must have 1 and 2 on the bottom row.
Treating $A^R$ as a departure sequence to satisfy, 10 must also be on the bottom row.  
Next, to avoid placing 3 in the bottom row, we must store it in a cell adjacent to both 10 and 1. As no such cell exists, 3 must also be placed in the bottom row.  
Similarly, 4 must be adjacent to one of 1 and 2 and one of 10 and 3 to avoid the bottom row. Again, this is not possible, so 4 is also assigned to the bottom row. Thus, 5 columns are required.
}
\label{fig:bounds}
\end{figure}

\noindent Putting everything together, we have the following:
\begin{corollary} \label{cor:k-robust-for-zero-reloc}
The problem of finding a zero-relocation solution for \rstormr is equivalent to finding an arrangement that satisfies $\inc$ and is $k$-robust for \dep.
\end{corollary}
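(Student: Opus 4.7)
The plan is to establish the equivalence by unpacking what a zero-relocation solution to \rstormr entails at the level of the intermediate arrangement $\arr$ present at the moment the storage phase ends. Any candidate solution splits into a storage phase, which must place loads in the order $\inc$ using one action per load, followed by a retrieval phase, which must remove them in the order $\tilde{\dep}$, again one action per load. The arrangement $\arr$ achieved at the transition point is the natural bridge between the two phases, and both directions of the equivalence reduce to properties of $\arr$ under the definitions of "satisfies" and "$k$-robust."

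For the forward direction, I would argue as follows. Suppose a zero-relocation solution exists regardless of which $k$-bounded perturbation $\tilde{\dep}$ is revealed during retrieval. The storage phase uses exactly one action per load and no relocations, so by definition $\arr$ satisfies $\inc$. Crucially, the storage phase must commit to $\arr$ before seeing any part of $\tilde{\dep}$, so the same $\arr$ must support zero-relocation retrieval under every admissible perturbation. Applying the definition of "satisfies" to each possible $\tilde{\dep}$ gives exactly that $\arr$ satisfies every $k$-bounded perturbation of $\dep$, i.e., $\arr$ is $k$-robust for $\dep$.

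For the reverse direction, suppose we are given an arrangement $\arr$ that satisfies $\inc$ and is $k$-robust for $\dep$. Since $\arr$ satisfies $\inc$, there is a sequence of one-action-per-load placements that realizes $\arr$ without any relocation, completing the storage phase. When retrieval begins and $\tilde{\dep}$ is revealed one load at a time, $k$-robustness guarantees that $\arr$ satisfies $\tilde{\dep}$, so there exists a one-action-per-load retrieval sequence consistent with $\tilde{\dep}$. Since the identity of each next load to retrieve is fully determined by the revealed prefix of $\tilde{\dep}$, this sequence can be executed online with no relocations, yielding a zero-relocation solution.

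No serious obstacle is anticipated, as both directions are essentially definition chasing built on \Cref{prop:local-pert} and the earlier characterization of "satisfies." The one subtlety worth articulating explicitly is the online nature of $\tilde{\dep}$: the storage arrangement must be fixed before any perturbation information is available, which is precisely why the equivalence uses the universal quantifier over all $k$-bounded perturbations encoded in the definition of $k$-robustness, rather than a guarantee tailored to a specific $\tilde{\dep}$.
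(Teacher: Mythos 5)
Your proposal is correct and matches the paper's (implicit) reasoning: the paper states this corollary without a separate proof, treating it as an immediate consequence of the definitions of \emph{satisfies} and $k$-robustness together with \Cref{obs:reverse} and \Cref{prop:local-pert}, which is exactly the definition-unpacking you carry out. Your explicit handling of the online revelation of $\tilde{\dep}$ and the universal quantification over perturbations is a faithful elaboration of the same argument, not a different route.
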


Given the equivalence we ask: how many columns in terms of $k$, guarantee the existence of this arrangement?

\subsection{Bounds on required number of columns}
The bounds are tight up to 1.5, for the necessary and sufficient number of columns of a zero-relocation \rstormr solution.

\begin{theorem}[(Upper bound)]
\label{thm:UB}
For an $\rows \times \cols$ storage area \storage with any $\rows \ge 1$ rows,
$3k+3$ columns suffice to guarantee a zero-relocation solution to \rstormr.
\end{theorem}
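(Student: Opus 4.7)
The plan is to construct a zero-relocation arrangement explicitly by exploiting the fact that a $k$-bounded perturbation cannot change the relative order of two loads whose departure positions in $\dep$ differ by more than $k$. Consequently, loads within the same congruence class modulo $k+1$ always depart in a fixed relative order regardless of which perturbation is realized. I would therefore partition the loads into $k+1$ groups $G_1, \ldots, G_{k+1}$ with $G_t = \{d \in [n] : d \equiv t \pmod{k+1}\}$, and correspondingly split \storage into $k+1$ vertical blocks $B_1, \ldots, B_{k+1}$ of dimensions $r \times 3$, so the total width is $3(k+1) = 3k+3$. Each $G_t$ is dedicated to $B_t$, and $|G_t| = 3r$ matches the capacity of $B_t$ at full density.

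Next, I would invoke \Cref{thm:3-col} on each block separately. Treating $B_t$ as a full-capacity $r \times 3$ storage area with the arrival and departure subsequences of $\inc$ and $\dep$ induced by membership in $G_t$, \prevalg produces a sub-arrangement $\arr_t$ of $G_t$ within $B_t$ that satisfies both restrictions. The global arrangement $\arr$ is then the disjoint union of the $\arr_t$'s, and the remainder of the proof reduces, via \Cref{cor:k-robust-for-zero-reloc}, to checking two properties of $\arr$.

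For satisfaction of $\inc$: when a load $a_s$ arrives, it belongs to a unique $G_t$, and the occupied cells of $B_t$ at this moment mirror exactly the state of the sub-instance on $B_t$ after storing the previously arrived members of $G_t$. By \prevalg's guarantee this state admits a path through empty cells of $B_t$ from its bottom row to the destination of $a_s$; since the $IO$ row is a transit buffer that is empty between actions, $a_s$ can be routed along $IO$ to any column of $B_t$ and then along the internal path, yielding a relocation-free storage action. For $k$-robustness of $\dep$: for any load $d_i$ not on the bottom row, \Cref{prop:local-orig} applied to $\arr_t$ and the restricted departure order supplies an adjacent $d_j \in G_t$ with $j < i$; since consecutive members of $G_t$ are exactly $k+1$ apart in $\dep$, necessarily $j \le i-(k+1)$, which is the condition of \Cref{prop:local-pert}.

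The step I expect to require the most care is the storage-feasibility argument, specifically justifying why the per-block paths that \prevalg relies on remain valid once many blocks are being filled concurrently under the global order $\inc$. The resolution rests on two observations: \prevalg's storage paths on $B_t$ are confined to the columns of $B_t$ and depend only on the configuration within $B_t$, which in turn is determined entirely by the subsequence of $\inc$ restricted to $G_t$; and the $IO$ row is never used for storage and is empty between successive actions, so inter-block transit through $IO$ never obstructs any block's internal path. Once these points are made precise, the per-block guarantees of \Cref{thm:3-col} compose into a valid global zero-relocation solution, completing the proof.
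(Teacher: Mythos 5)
Your proposal is correct and follows essentially the same route as the paper: partition the loads by congruence class modulo $k+1$, dedicate three columns to each class, apply \Cref{thm:3-col} per block, and observe that any same-class neighbor departing earlier must be at least $k+1$ positions earlier in $\dep$, which is exactly the condition of \Cref{prop:local-pert}. Your extra care about composing the per-block storage paths via the I/O row is a sound elaboration of a point the paper leaves implicit, but it does not change the argument.
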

\begin{proof}
We present an algorithm and prove its correctness.

\textbf{Algorithm.} 
The idea is to partition the loads into $k+1$ subsets, each of which will be treated as an independent \stormr instance:
More specifically, we partition the arrival sequence \inc into $k+1$ subsequences where each subsequence contains $a_i$'s of the same congruence class (i.e., remainder) modulo $k+1$.
Formally, for each $j \in \{0, \ldots, k\}$, define ${A_j \coloneq (a_i \mid a_i \equiv j \pmod{k+1})}$.
These subsequences form a partition of the loads.
We then assign the loads in each $A_j$ to 3 dedicated contiguous columns, thus using $3k+3$ columns in total.
We then treat each $A_j$ as a \stormr instance for 3 columns and apply algorithm \prevalg to assign the loads.
See \Cref{fig:bounds}.

\textbf{Correctness.}  %
\Cref{thm:3-col} guarantees that (non-robust) adjacencies are met for each $A_j$ within the respective 3 columns.
We claim that the adjacencies are also robust, as required by \Cref{prop:local-pert}.
Fix a load $x$ that is not assigned to the bottom row. Load $x$ has a neighboring load $y$ in $A_j$ that departs earlier.
As $A_j$ consists of loads with departure indices congruent modulo $k+1$, $y$ must appear at least $k+1$ positions before $x$ in \dep, as required.
\end{proof}

\begin{theorem}[(Lower bound)]
\label{thm:LB}
For an $\rows \times \cols$ storage area \storage with $\rows > 1$ rows,
$2k+3$ columns are necessary to guarantee a zero-relocation solution for \rstormr.
\end{theorem}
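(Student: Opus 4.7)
The plan is to construct, for any $r>1$ and $c\le 2k+2$, a specific arrival sequence $A$ such that every arrangement that satisfies $A$ and is $k$-robust for $D$ must pin at least $2k+3$ distinct loads to the bottom row, forcing $c\ge 2k+3$ and thus, via \Cref{cor:k-robust-for-zero-reloc}, ruling out any zero-relocation solution. The first $k+1$ bottom-row loads come for free from \Cref{prop:local-pert}: for each $i\le k+1$, the set of loads that depart at least $k+1$ positions before $d_i$ in $D$ is empty, so each of loads $1,\dots,k+1$ must lie in the bottom row. My task is therefore to force $k+2$ additional loads onto the bottom row through a careful choice of the tail of $A$.

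The construction fixes $n$ large enough (it suffices that $n\ge 2k+3$, which can always be arranged by choosing $r$ appropriately) and defines $A$ so that its reverse begins
\[
A^R=(n,\;k+2,\;k+3,\;\dots,\;2k+2,\;\dots),
\]
with the remaining positions filled by an arbitrary permutation of the leftover labels. By \Cref{obs:reverse}, satisfying $A$ is equivalent to satisfying $A^R$ as a departure order, so \Cref{prop:local-orig} applies to $A^R$. I would then prove by induction on $i\in\{1,\dots,k+2\}$ that $A^R_i$ sits in the bottom row. The base case is $A^R_1=n$, forced there because it has no earlier element in $A^R$. For the inductive step with $i\ge 2$, set $x=A^R_i=k+i$ and suppose $x$ is not in the bottom row. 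Then \Cref{prop:local-orig} applied to $A^R$ demands that $x$ be adjacent to some load in $S_1=\{n,\,k+2,\dots,k+i-1\}$, while \Cref{prop:local-pert} applied to $D$ demands that $x$ be adjacent to some load in $S_2=\{1,\dots,i-1\}$. For $i\le k+2$ the two sets are disjoint, since every element of $S_1$ is at least $k+2>i-1$. By the inductive hypothesis together with the ``free'' group, every load in $S_1\cup S_2$ occupies a distinct cell of the bottom row. Because any cell outside the bottom row is adjacent to at most one bottom-row cell---namely the one directly below it, and only if the cell itself lies in row~$2$---$x$ cannot simultaneously be adjacent to a member of $S_1$ and a member of $S_2$, a contradiction that forces $x$ onto the bottom row.

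Running the induction through $i=k+2$ yields the additional loads $\{n,k+2,\dots,2k+2\}$, which together with $\{1,\dots,k+1\}$ give $2k+3$ distinct loads pinned to the bottom row. The main subtlety is the inductive adjacency argument: the disjointness of $S_1$ and $S_2$ is exactly what drives the sharp cutoff at $i=k+2$, and the ``at most one bottom-row neighbor off the bottom row'' geometric fact---which genuinely uses $r>1$---is what makes the two disjoint adjacency demands irreconcilable above the bottom row. If either piece fails for a given~$i$, the pinning argument breaks, which is why the construction yields $2k+3$ rather than a bound matching the $3k+3$ upper bound.
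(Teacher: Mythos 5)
Your proposal is correct and follows essentially the same route as the paper's proof: the identical construction with $A^R$ beginning $n, k+2, \ldots, 2k+2$, the same observation that $1,\ldots,k+1$ are forced to the bottom row, and the same induction using the fact that no cell above the bottom row is adjacent to two distinct bottom-row cells. The only difference is cosmetic (your indexing by $i$ versus the paper's $k+i+1$, and your slightly more explicit statement of the disjointness of the two adjacency-demand sets).
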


\begin{proof}
We construct an instance with $n \ge 2k+3$ loads that completely fill the storage area.
See example in \Cref{fig:bounds} (right).
We set the first $k+2$ loads of $A^R$ to be $n, {k+2}, {k+3}, \ldots, 2k+2$ (i.e., these are the last $k+2$ loads in $A$); the rest of the sequence is arbitrary. Set $D= (1,2,\ldots,n)$.
First, notice that the loads $1,2,\ldots,k+1$ must all be placed in the bottom row of \storage, as each of them may need to depart first under a perturbation.

We now prove by induction that the first $k+2$ loads on $A^R$ must also be placed in the bottom row, in addition to the loads $1,2,\ldots,k+1$, which would require $2k+3$ columns in total.
For the base case, since $n$ is first in $A^R$ it must be placed in the bottom row.
Let us assume that the loads $n, k+2, \ldots, k+i$ must be placed in the bottom row (for some $2 \le i < k+2$) and show that the same holds for load $k+i+1$.
More specifically, we argue that there is no way to satisfy the required local adjacencies for $k+i+1$ otherwise:
To avoid placing load $k+i+1$ on the bottom row, it must be adjacent to one of the loads $1,2,\ldots,i$ (to meet adjacency requirements for $\dep$) and also be adjacent to one of the loads appearing before it $A^R$, namely, $n, k+2, \ldots, k+i$.
However, the loads $1,2,\ldots,i$ are in the bottom row and so are the loads $n, k+2, \ldots, k+i$, by the induction hypothesis. There is clearly no cell above the bottom row that is adjacent to two different cells on the bottom row.
Therefore, we must place $k+i+1$ in the bottom row as well, as required.

Finally, notice that the analysis already holds for a storage area with two rows and is the same for a higher number of rows. %
\end{proof}

Rephrasing our bounds, we conclude that for ${k/c \approx k/(c-3) \le 1/3}$ we can avoid relocations, whereas for ${k/c \approx k/(c-3) \ge 1/2}$ we cannot.
We observe that even though the lower bound requires $2k+3$ columns in general, fewer columns may suffice for some instances, as occurs in \Cref{fig:intro_ex} (where 3 columns suffice for $k=1$ instead of 5).

With guidance from the theoretical bounds, we now turn to a practical algorithm, which achieves a high success rate at finding robust arrangements even when $k$ is roughly half the grid width, closely matching \Cref{thm:LB}.

\begin{figure}[htb]
    \centering
    \begin{overpic}[width=.86\columnwidth]{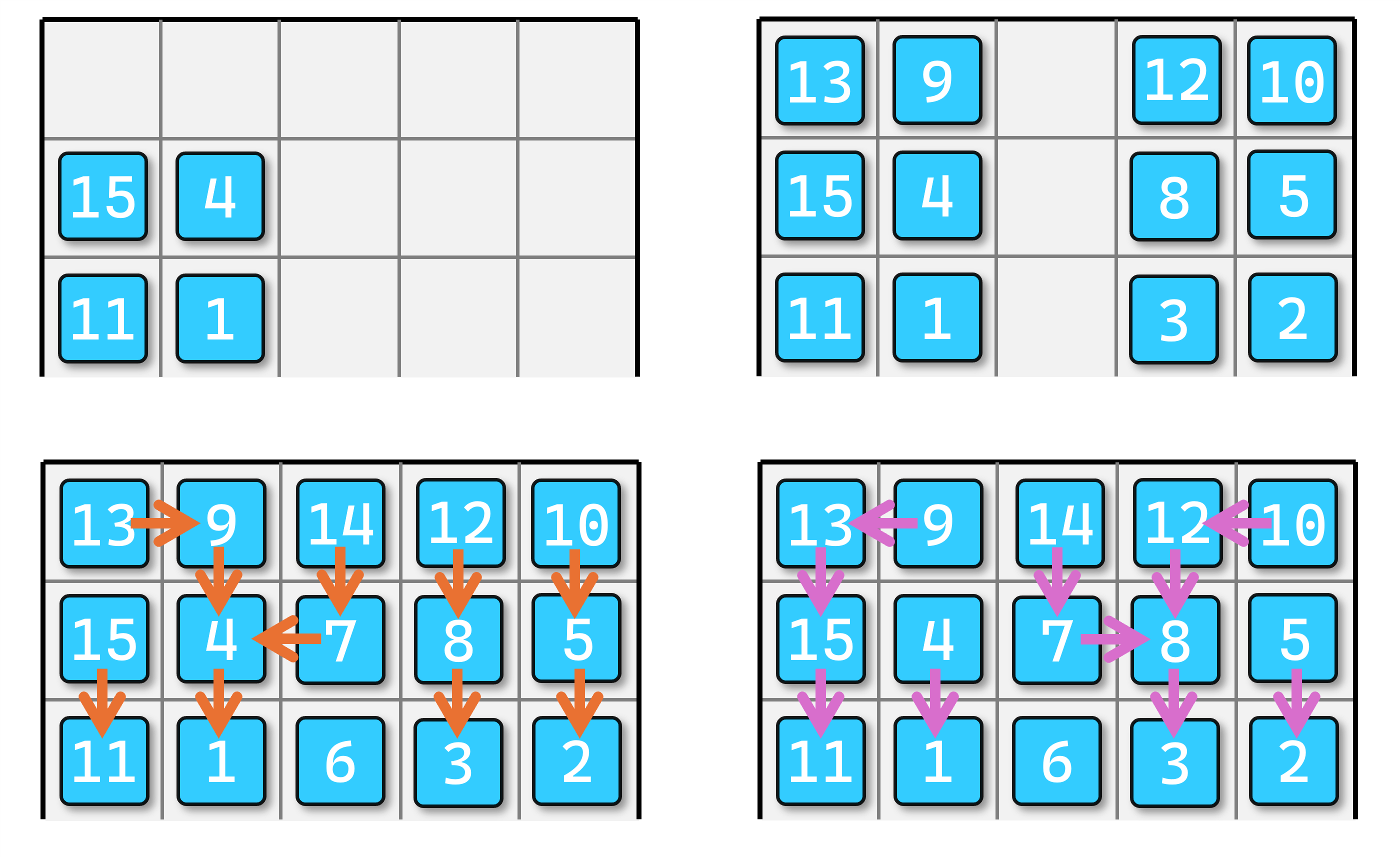}
    \put(22, 30.8){{\small $(a)$}}
    \put(73, 30.8){{\small $(b)$}}
    \put(22, -1){{\small $(c)$}}
    \put(73, -1){{\small $(d)$}}
    \end{overpic}
    \caption{The storage algorithm for a ${3 \times 5}$ grid with $A^R = (11, 3, 15, 8, 2, 7, 13, 12, 1, 9, 10, 14, 6, 4, 5)$, $D=[15]$, and $k=2$.
    (a)(b) Snapshots showing partial arrangements as the algorithm runs.
    First column pair:
    1 and 11 are placed per column initialization.
    Next, we set $x=4$ as the first load in $D$ that can be placed above 1. In the inner loop, $y = 3$ is discarded as it does not satisfy adjacencies for $D$; we proceed to $y = 15$, which is valid.
    The main loop then sets $x = 7$, but no matching valid $y$ is found until we reach $y = 7$, at which point 7 cannot be placed in $R$.
    Continue to $x = 8$, reaching the same conclusion.
    Then, for $x = 9$, a valid match is found with $y = 13$.
    (c)(d) Arrows showing valid adjacencies for departures and arrivals (treating arrivals in reverse).%
    }
    \label{fig:storage_ex}
\end{figure}

\section{Finding a Robust Arrangement}
The previous section provides guidelines on what uncertainty levels can be supported by a given storage area. We now constructively find a robust arrangement without relocations under these conditions.
Although $2k+3$ columns are required in general, fewer columns might suffice for a given instance.
In practice, instead of fixing $k$, one might rather adapt it based on the storage and retrieval sequences to maximize robustness by maximizing $k$.

This raises the following problem: given a \rstormr instance, find an arrangement \arr that is $k$-robust for $\dep=[n]$ and also satisfies $\inc$.
Such an \arr corresponds to a zero-relocation solution as we establish in \Cref{cor:k-robust-for-zero-reloc}.
We call such an arrangement \emph{valid}.
Similarly, a load $x$ is \emph{valid} if it is on the bottom row or is both \emph{(i)} \emph{$D$-valid}, i.e., adjacent to load $y$ with $y\le x-k-1$ and \emph{(ii)} \emph{$A$-valid}, i.e., adjacent to a load $z$ that appears before $x$ in $A^R$.

\subsection{Main algorithm}

The main idea is to fill pairs of adjacent columns, denote them as $L$ and $R$ (for left and right), from the bottom row upwards.
Jointly iterate over $\dep$ and $A^R$ and greedily find pairs of loads that we can store adjacent to each other on $L$ and $R$ so that the loads are valid in the current (partial) arrangement. In this setup, we may skip loads as we iterate until we find valid loads to assign.
We aim to have column $L$ maintain bottom-up adjacencies for $A^R$ while column $R$ maintains the same for $D$.
For the loads on $L$ to also satisfy the adjacencies for $D$ we rely on horizontal adjacencies. Analogously, for $R$ and $A^R$.
If the algorithm reaches a stage where there is no pair of loads that can be assigned, a failure is declared. Refer to \Cref{alg:storage} (where an even number of columns is assumed for simplicity) and to \Cref{fig:storage_ex}.

\textbf{Initializing a column pair.}
For each column pair $(L,R)$, the bottom cells are initialized to establish vertical adjacency chains (lines \ref{l:botL}-\ref{l:botR}).
For $L$, we initialize with next unassigned load in $A^R$.
We aim to place the first $k+1$ loads from $D$ ($1, \ldots, k+1$) in the front row, as these loads may need to depart first under perturbations. %
Therefore, we initialize $R$ with the smallest unassigned load.

\textbf{Main loop (\cref{l:loop}).}
Find the next load $x$ that can be assigned to $R$ by taking $x$ to be the first unassigned $D$-valid load, i.e., ${x \ge x' + k + 1}$, where $x'$ is assigned and adjacent to the lowest empty cell on $R$.  
With $x$ fixed, iterate over $A^R$, until a matching load $y$ is found so that all adjacency conditions are met. %
When checking adjacencies, consider all neighboring cells of $x$ and $y$ that have a load assigned, to maximize success (i.e., $y$ might "rely" on a load to its left rather than on $x$ or the load below $y$).
If $y = x$, \textbf{continue} as we aim to assign a pair of loads at each step.\footnote{As a speed up, when not filling the last column pair, \textbf{break} from the inner loop instead to examine the next load for $x$.
This is done because $y$ must appear before $x$ in $A^R$ for $x$ to satisfy the adjacency for $A^R$.
If $y = x$, there is no point considering later loads in $A^R$.}
If the main loop iterates without finding a valid pair to assign, return failure.

\textbf{Assigning the last loads.} To increase the likelihood of the assignments succeeding for last loads assigned, we leave special column(s) empty for them, to be filled after all other columns.
If $\cols$ is odd, we leave $C_3$ (the third column) as the last empty column, with the remaining loads assigned in the order in which they appear in $\dep$.
Otherwise, we use $C_3$ and $C_4$ as the last column pair to be filled.
By leaving $C_3$ and possibly $C_4$ as the last columns, we use $C_2$ and $C_4$ (or $C_5$), which contain early appearing loads in \dep and $\inc^R$, for potential horizontal adjacencies.

Since we always check whether loads are valid before assigning them, any returned arrangement is valid.
However, whenever a failed is returned, it does not mean that there is no valid arrangement. Therefore, we present an enhancement that considers more potential arrangements.

\begin{algorithm}[t]
\caption{Find robust arrangement}
\begin{small}
\label{alg:storage}
\KwIn{\rstormr instance  $\rows , \cols \ge 5, \inc , \dep, k$}
\KwOut{A valid arrangement \arr or \textbf{Failure}}

 $\mathcal{P} \gets [(C_1,C_2), (C_5, C_6), \dots, (C_{c-1},C_{c}), (C_3, C_4)]$\;

\ForEach{pair $(L, R)$ in $\mathcal{P}$}{
    $X \gets$ iterator over unassigned loads in $\dep$ starting from $k+2$\; %
    $Y \gets$ iterator over unassigned loads in $A^R$\;

    bottom cell of $L$ $\gets$ first unassigned load from $Y$\; \label{l:botL}
    bottom cell of $R$ $\gets$ smallest unassigned load\; \label{l:botR} %

    \While{$L$ and $R$ not full} 
    {  
        Advance $x \in X$\;
        \lIf{$X$ is exhausted}{\Return \textbf{Failure}}
        \lIf {$x$ is not $D$-valid when assigned to $R$}
        { 
            \textbf{continue}
        }

        \For{$y$ in $Y$}{ %
            \lIf{$y = x$}{
                \textbf{continue} %
            }
            \If{$x$ and $y$ are valid when $x$ and $y$ assigned to $R, L$ respectively}{ 
                Assign $x, y$ to $R, L$ respectively\;
                \textbf{break}\;
            }
        }
    } \label{l:loop}
}

\Return resulting arrangement \arr\;
\end{small}
\end{algorithm}

\subsection{Load-skipping enhancement} %
We introduce an enhancement to \Cref{alg:storage} to increase its success rate.
A valid arrangement only requires that the first load in $A^R$ is in the bottom row, while freedom exists for other loads.
We exploit this by considering all options for the choice of the first load of $A^R$ assigned to the left column.
That is, for the first column pair filled, we set the iterator $Y$ to start at an offset, initializing the arrival adjacency chain from the middle of $A^R$.
For the remaining column pairs, we proceed as normal. 
The enhanced version tries all possible offsets from 0 to $n-r$, which one can run in parallel.

\section{Solving the 2D Grid Relocation Problem}
\rstormr may require relocations as $k$ increases. This section addresses the retrieval phase where, given an initial arrangement \arr, the goal is to minimize relocations during retrieval.
Already for stack-based storage and a known retrieval order, the problem is NP-hard~\cite{BRP-hard}. Our setting is more involved if a load to be retrieved is blocked by other loads as (i) there are multiple retrieval paths (that determine the loads to relocate), and (ii) there are more options for where to relocate. %

Given the above and so as to minimize I/O row usage, we impose the following constraints:
The I/O row must be empty after each load is retrieved, i.e., all relocated loads must end up in \storage.
Furthermore, when loads on the I/O row are stored back in \storage, simply return them to their original cells in $\storage$, as in \Cref{fig:intro_ex} (d)-(f).
This choice reflects that we choose \arr to enable robust relocation-free retrieval according to \dep.

\textbf{Relocation procedure.}
Let $x$ denote the \emph{target load} to be retrieved. Compute a retrieval path $\pi$ from $x$ to the I/O row that minimizes blocking loads, breaking ties by path length. If $\pi$ does not pass through other loads, $x$ can be retrieved.
Otherwise, the blocking loads along $\pi$, i.e., the \emph{blockers}, must be relocated, starting from the outermost and proceeding inward. Relocate each blocker $b$ in a greedy manner, assigning it to a favorable empty cell that is not on $\pi$ (keeping $\pi$ clear for $x$ to be retrieved).
See \Cref{fig:multiblocker_ex} for an example.

First consider the case where $b$ has reachable empty cells in \storage.
In this case, aim to assign $b$ to a destination cell from which $b$ will not be relocated again during a subsequent retrieval.
To this end, compute a set $U$ of \emph{unblocked} loads, which are loads that have direct access to the I/O row.
For each candidate assignment of $b$ to a destination cell, check whether a load in $U$ becomes blocked.
A load $y$ is considered blocked, if it departs before all of its adjacent loads as well as $b$, per \dep, and has no direct path to the I/O row.
If there is a destination cell that does not make a load of $U$ blocked, assign $b$ to it, preferring the closest such cell.
Otherwise, assign $b$ to the closest empty cell.
In both cases, ensure that we assign $b$ to a cell that leaves sufficiently many reachable empty cells in \storage for the remaining blockers still on $\pi$. %

Alternatively, there are no empty cells in \storage for relocations and the remaining blockers are relocated to the I/O row.
These blockers will be relocated back to \storage after $x$ is retrieved. All relocations in this case use $\pi$ to go to/from the I/O row. Assuming $\rows \le \cols$, the number of cells on the I/O row suffices.

\begin{figure}[t!]
    \centering
    \begin{overpic}[width=1\columnwidth]{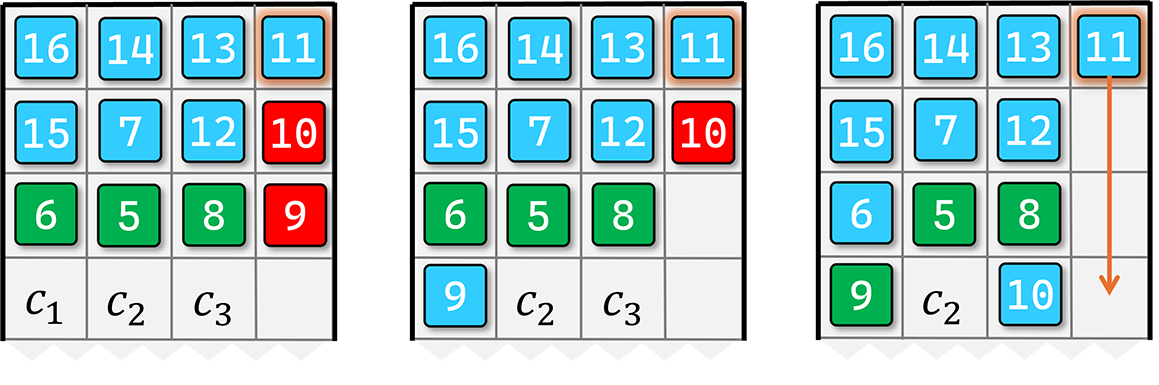}
    \put(13, -1){{\small $(a)$}}
    \put(47, -1){{\small $(b)$}}
    \put(81, -1){{\small $(c)$}}
    \end{overpic}
    \caption{Example relocations: (a) Load $11$ is to be retrieved. $\pi$ is a straight downward path and $9$, $10$ are blockers (red). To relocate $9$, we compute the set of unblocked loads $U = \{5,6,8\}$ (green). Destination cell $c_3$ is not chosen because it would disconnect $10$ from empty cells and $c_2$ is also not chosen as placing $9$ there would block $5$. (b) $9$ is relocated to $c_1$ which keeps $6$ unblocked due to its adjacency to $5$. (c) $U$ is recomputed and $10$ is relocated to $c_3$, as it does not block any loads.
    }
    \label{fig:multiblocker_ex}
\end{figure}

\section{Experimental Evaluation} \label{sec:exp}
The objective of the experimental evaluation is to measure the improvement achieved by the proposed storage and retrieval strategies over baselines for total number of relocations,
I/O-row usage, %
and total load distance traveled in square grids.
Since the theoretical analysis establishes the ratio $k/c$ as a key parameter affecting relocations in \rstormr, four values of the ratio are considered, i.e., $k$ values of $0.25c, 0.5c, 0.75c, c$ per grid size with $c$ columns.
We run $50$ trials for each combination of grid side length and \( k \), always at 100\% density.
Each trial randomly generates $\inc$,  which is given to the storage algorithm, and then draws a $k$-bounded perturbation of $(1, 2, \ldots, n)$, which is revealed one at a time during retrieval.

\begin{figure*}[t]
    \centering
    \begin{subfigure}{0.563\columnwidth}
        \centering
        {\small{\hspace{2.4em}BaseS BaseR}}\\[0.15em]
        \includegraphics[width=\textwidth]{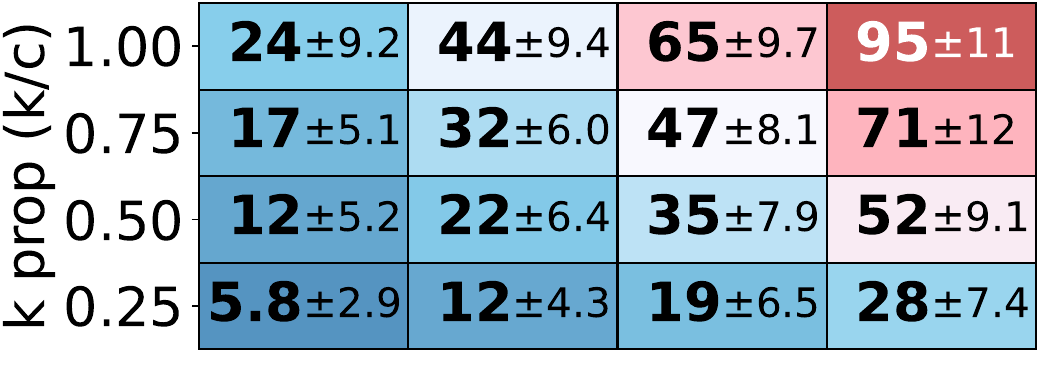}
    \end{subfigure}
    \begin{subfigure}{0.46\columnwidth}
        \centering
        {\small{\imps BaseR}}\\[0.15em]
        \includegraphics[width=\textwidth]{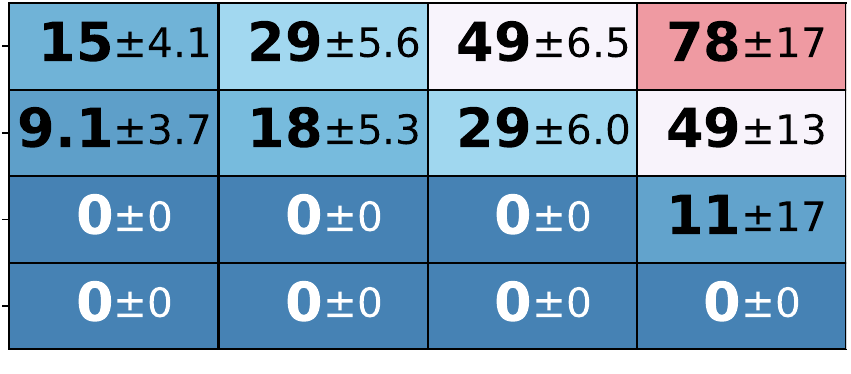}
    \end{subfigure}
    \begin{subfigure}{0.46\columnwidth}
        \centering
        {\small{BaseS \textbf{ImpR}}}\\
        \includegraphics[width=\textwidth]{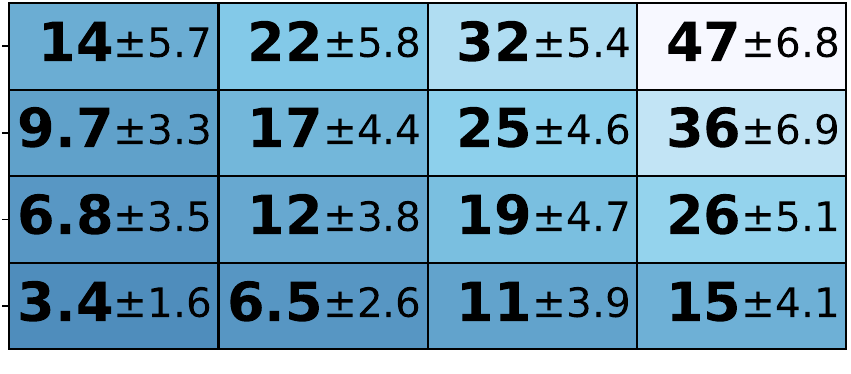}
    \end{subfigure}
    \begin{subfigure}{0.581\columnwidth}
        \centering
        {\small{\hspace{-3.1em}\textbf{\imps ImpR}}}\\
        \includegraphics[width=\textwidth]{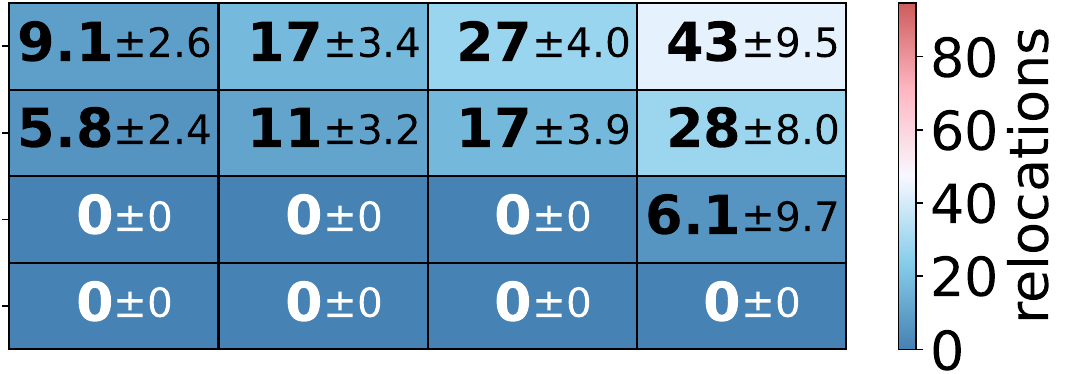}
    \end{subfigure}

    \begin{subfigure}{0.563\columnwidth}
        \centering
        \includegraphics[width=\textwidth]{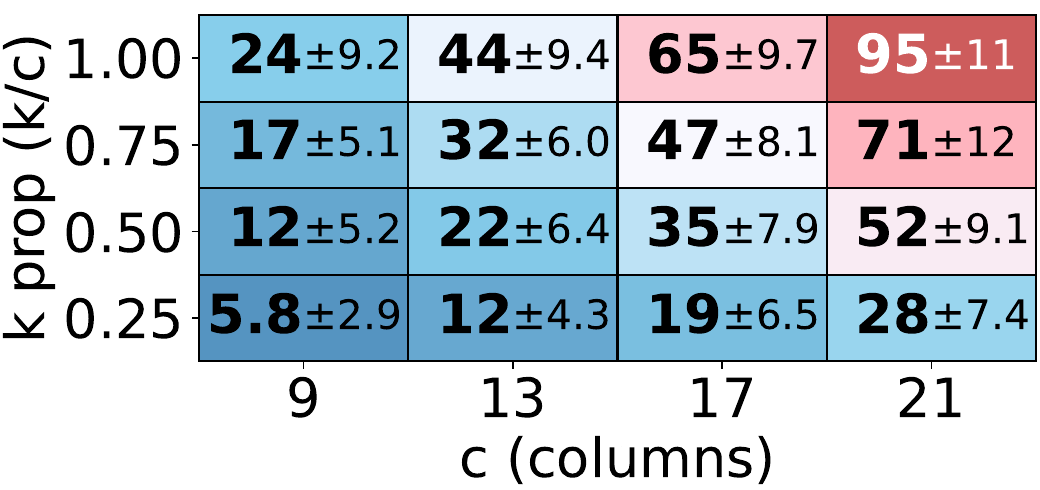}
    \end{subfigure}
    \begin{subfigure}{0.46\columnwidth}
        \centering
        \includegraphics[width=\textwidth]{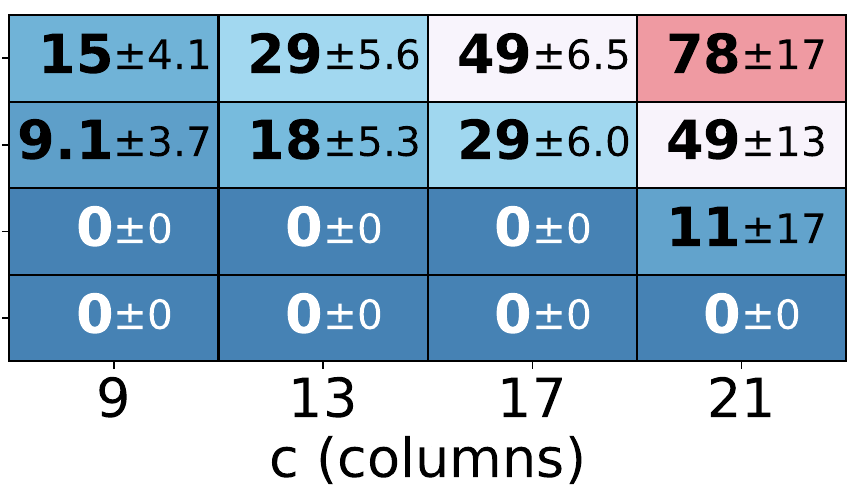}
    \end{subfigure}
    \begin{subfigure}{0.46\columnwidth}
        \centering
        \includegraphics[width=\textwidth]{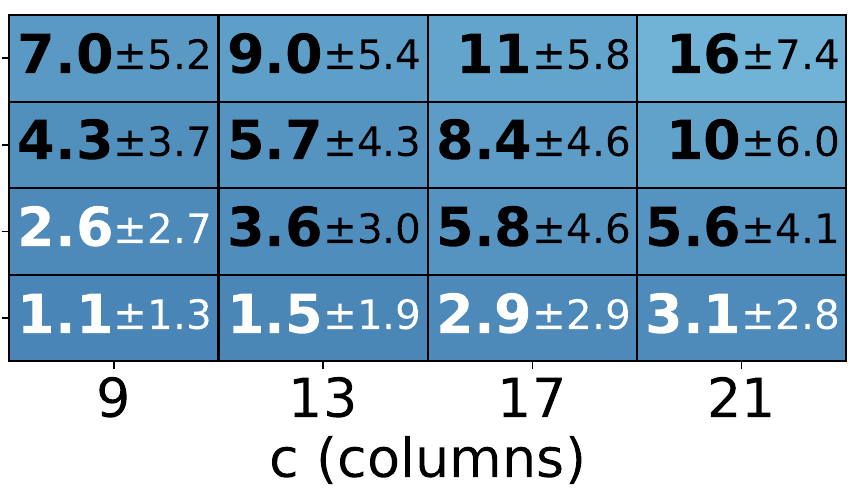}
    \end{subfigure}
    \begin{subfigure}{0.581\columnwidth}
        \centering
        \includegraphics[width=\textwidth]{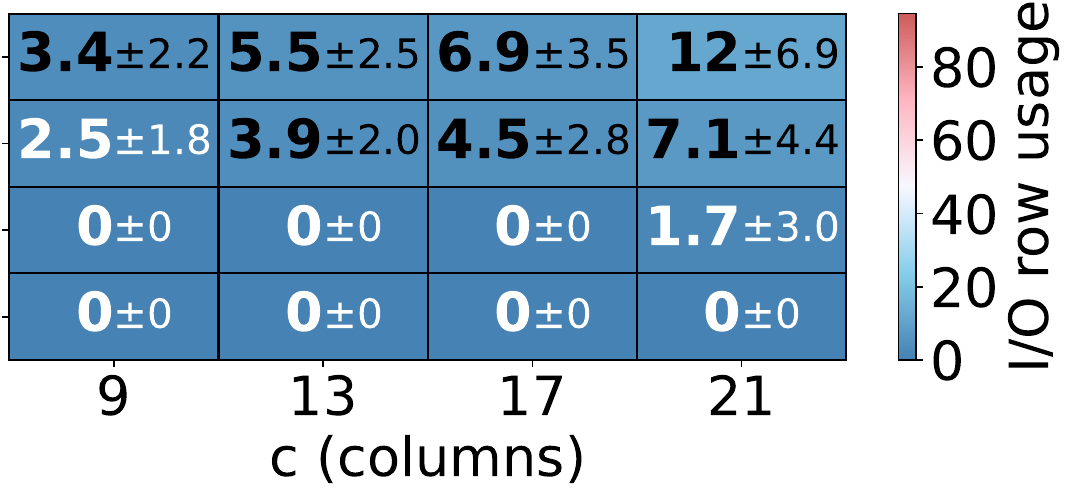}
    \end{subfigure}
    
    \caption{Mean relocations $\pm$ st. dev. (top) and mean I/O row usage $\pm$ st. dev. (bottom) for varying grids and $k$ values, comparing storage and retrieval algorithms.
    The heatmaps are the same in the two leftmost columns since BaseR always uses the I/O row.
    }
    \label{fig:exp:heatmap}
\end{figure*}

{
\captionsetup[subfigure]{aboveskip=0pt, belowskip=0.5pt}
\begin{figure}[!t]
    \centering
    \includegraphics[width=0.47\textwidth]{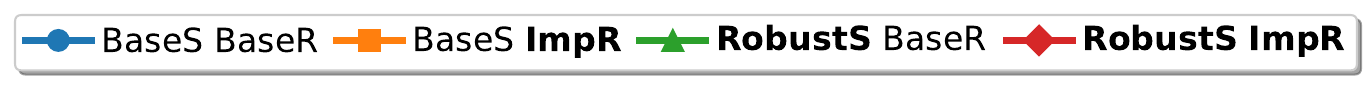}

    \begin{subfigure}{0.23\textwidth}
        \centering
        \includegraphics[width=0.95\textwidth]{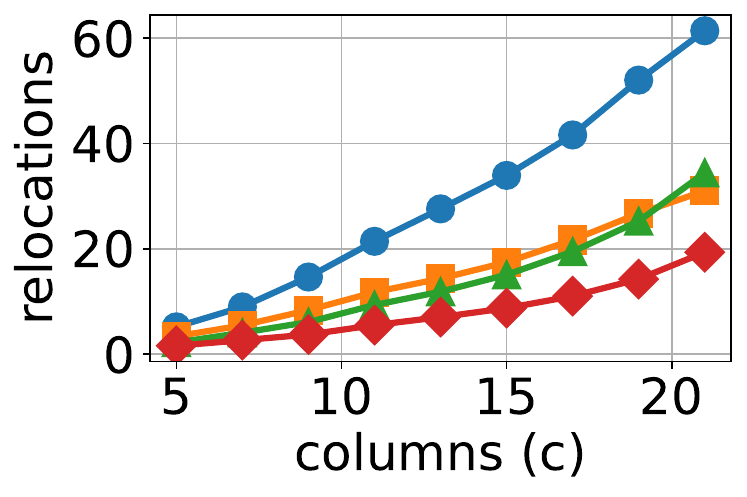}
        \caption{Relocations vs. grid size.}
    \end{subfigure}
    \hfill
    \begin{subfigure}{0.23\textwidth}
        \centering
        \includegraphics[width=0.95\textwidth]{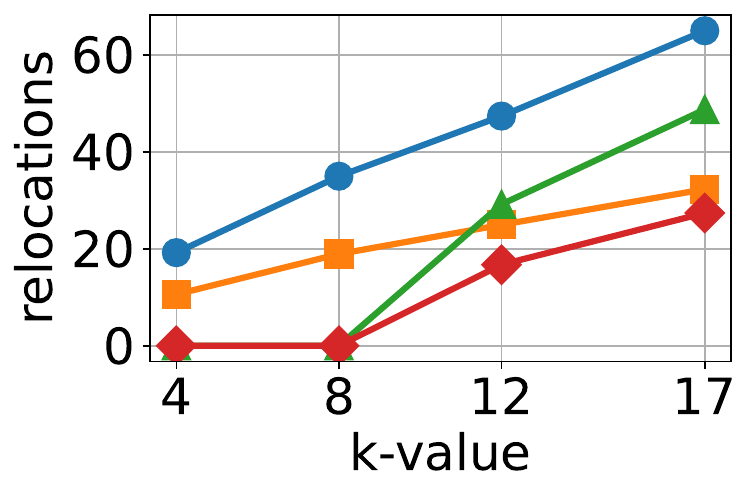}
        \caption{Relocations vs. $k$.}
    \end{subfigure}

    \begin{subfigure}{0.23\textwidth}
        \centering
        \includegraphics[width=0.95\textwidth]{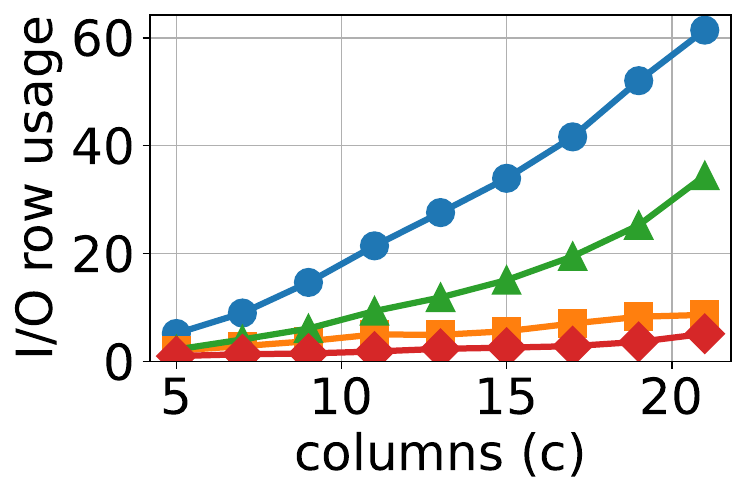}
        \caption{I/O row usage vs. grid size.}
    \end{subfigure}
    \hfill
    \begin{subfigure}{0.23\textwidth}
        \centering
        \includegraphics[width=0.95\textwidth]{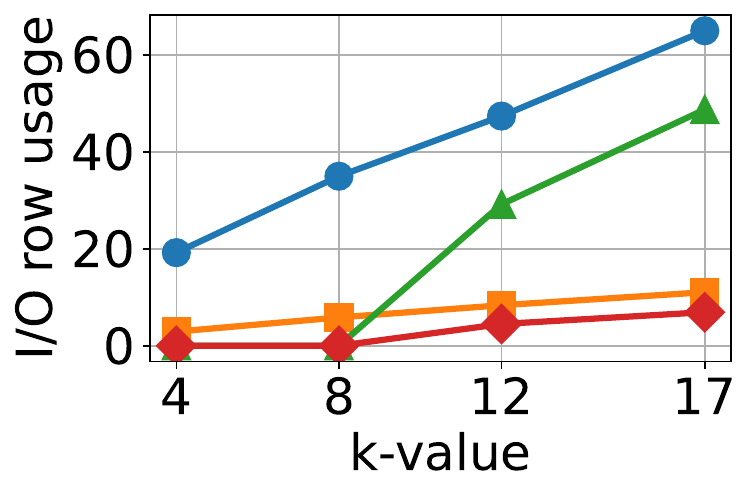}
        \caption{I/O row usage vs. $k$.}
    \end{subfigure}

    \begin{subfigure}{0.23\textwidth}
        \centering
        \includegraphics[width=\textwidth]{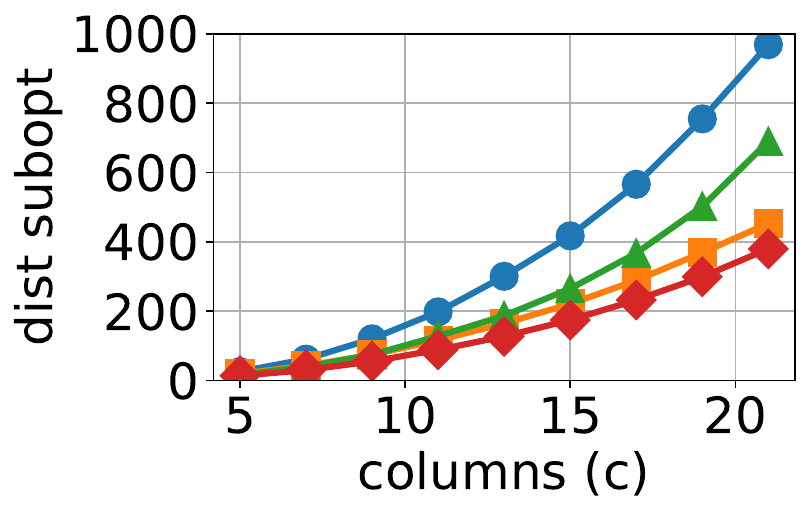}
        \caption{Distance subopt. vs. grid size.}
    \end{subfigure}
    \hfill
    \begin{subfigure}{0.23\textwidth}
        \centering
{\includegraphics[width=.97\textwidth]{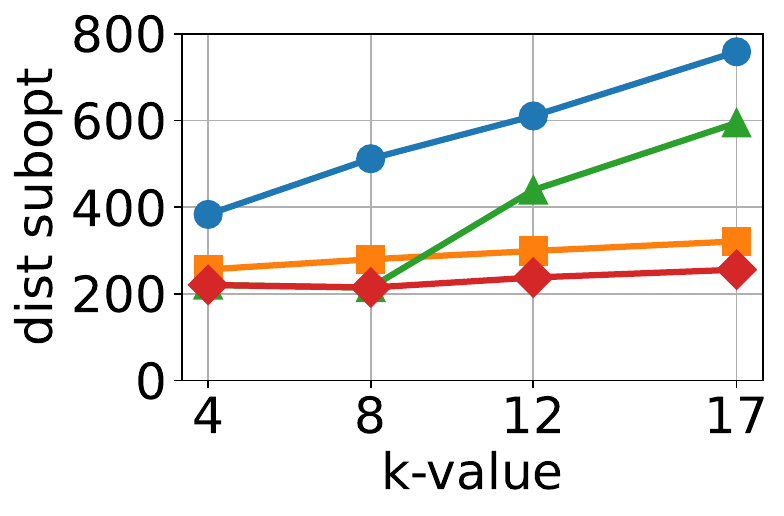}}
        \caption{Distance subopt. vs. $k$.}
    \end{subfigure}

    \caption{Relocations (top), I/O row usage (middle), and distance subopt. (bottom) for varying grid sizes and $k$ values. 
    In (a)(c)(e) we average across all four $k$’s per grid size. 
    In (b)(d)(f) we fix a $17\times17$ grid. 
    Each point averages 50 trials.}
    \label{fig:exp:reloc-io-plot}
\end{figure}
}

\begin{figure}[!h]
    \centering
    \begin{subfigure}[t]{0.23\textwidth}
        \centering
        \includegraphics[width=.95\textwidth]{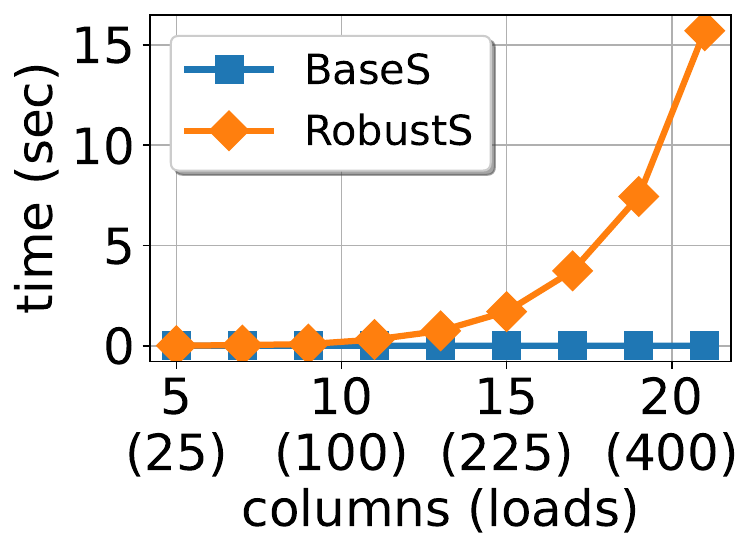}
        \caption{Storage running times.}
    \end{subfigure}
    \hfill
    \begin{subfigure}[t]{0.23\textwidth}
        \centering
        \includegraphics[width=.95\textwidth]{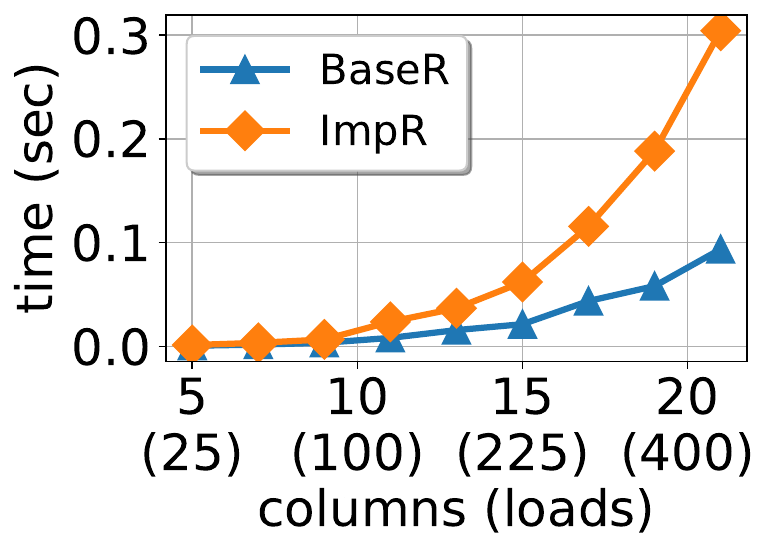}
        \caption{Retrieval running times.}
    \end{subfigure}
    \caption{Run-times averaged across $k$ values per grid size.}
    \label{fig:main:runtime}
\end{figure}

We compare the following algorithm variants for storage:
\begin{itemize}[leftmargin=3mm]
    \item \prevalg: store for 0 relocations assuming a known departure sequence per prior work \cite{rss25}.
    
    \item \imps: The proposed improved algorithm for a $k$-robust arrangement. If this algorithm does not find a $k$-robust arrangement (not always guaranteed to exist), decrement $k$ until one is found. For $k=0$, BaseS is used as a fallback, though this was not frequently observed in experiments.
\end{itemize}

\noindent We compare the following algorithm variants for retrieval: 
\begin{itemize}[leftmargin=3mm]
    \item BaseR:
    We find a retrieval path $\pi$ that contains the fewest blocking loads, breaking ties by path length.  When the target load $x$ is blocked, relocate the blocking loads to the I/O row, retrieve $x$, and place the blocking loads back in their original cells.
    Loads can be moved using $\pi$.

    \item \textbf{ImpR}: Our improved relocation algorithm.
\end{itemize}

\subsection{Results}
We present results using Python on an Apple M3 with macOS 15.5 for combinations of the above variants for storage and retrieval.
Numerical results for the number of relocations and I/O row usage are presented in \Cref{fig:exp:heatmap}. We also plot average results across all $k$ values for varying grid sizes, and separately across $k$ for a fixed grid size in \Cref{fig:exp:reloc-io-plot}. This figure includes \emph{distance suboptimality (subopt)} plots, defined as total grid path length traveled by loads minus a lower bound accounting for a fully packed grid, given by \(cr(r+1)\), i.e., excess over the optimum.

Experiments show that the combined (\imps + \impr) outperform the baseline and variants where only the storage or retrieval is improved. When $k$ is at most half the grid width ($k/c \le 0.5$), \imps nearly eliminates rearrangements.
These results are in line with \Cref{thm:LB}, which indicates that relocations are unavoidable when $k$ approaches half the grid width.
For larger $k$, relocations are reduced by up to 60-70\%.
For every grid size, \imps and \impr run in under 1 min. and 1 sec., respectively; see \Cref{fig:main:runtime}.

In nearly all cases, loads relocated within \storage are not relocated again, indicating that we avoid cascading relocations.
Qualitatively, relocation choices are typically straightforward: early on, blockers are moved to the I/O row due to limited space; later, as the grid empties, accessible cells simplify blocker placement decisions. While \impr significantly reduces relocations to the I/O row, \imps provides a further reduction, up to their elimination for $k/c\le0.5$.

Lastly, we show the improvement of \imps due to the enhancement; see \Cref{fig:exp:imps-abla}.

\begin{figure}[t]
    \centering
    \includegraphics[width=0.2\textwidth]{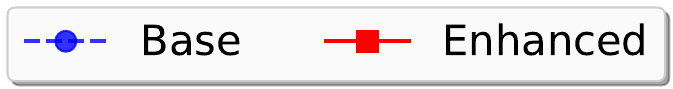}

    \begin{subfigure}[t]{0.233\textwidth}
        \centering
        \includegraphics[width=.95\textwidth]{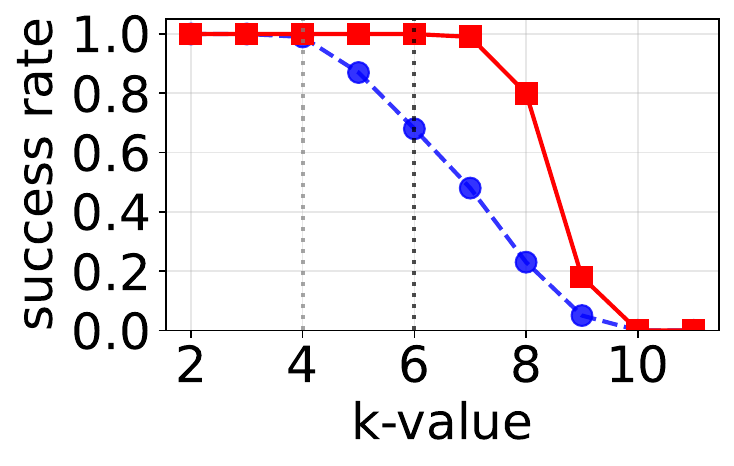}
    \end{subfigure}
    \hfill
    \begin{subfigure}[t]{0.233\textwidth}
        \centering
        \includegraphics[width=.95\textwidth]{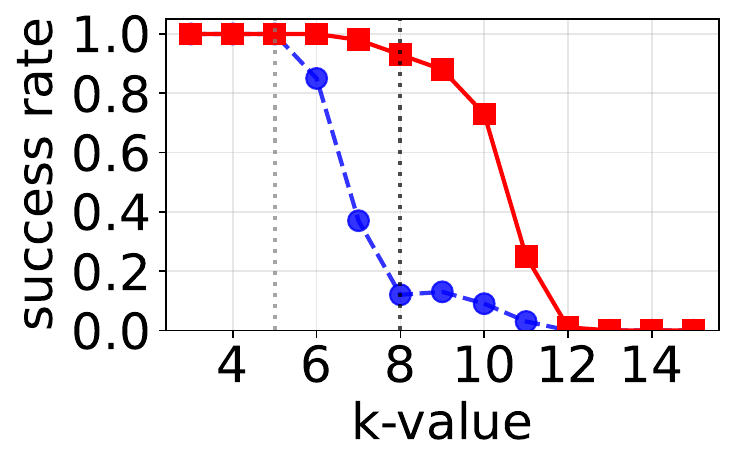}
    \end{subfigure}

    \caption{Ablation showing the success rate of \imps with and without the load-skipping enhancement on a $15 \times 15$ (left) and $19\times 19$ (right) grids (100 trials per data point). The enhanced \imps achieves 80\%+ success even for higher $k$ than the theoretical limits of 6 and 8 respectively (thick vertical dashed line) beyond which relocations can occur.
    }
    \label{fig:exp:imps-abla}
\end{figure}

\section{Conclusion}

This study expands the applicability of grid-based storage and retrieval by introducing uncertainty to this setting through novel theoretical and empirical results. It establishes design guidelines by relating the grid’s opening width to the feasibility of robust zero-relocation solutions. Empirically, our storage and retrieval approach significantly reduces relocations for storage at full capacity and confines most relocations to within the grid.

This work opens directions for further investigation: Is it possible to determine in polynomial time whether a robust arrangement exists for a given $k$ (strengthening the heuristic approach)? Can one always find a robust arrangement with $2k+3$ columns, closing the gap between our bounds? One could also consider the multi-robot case and MAPF reasoning in our setting.

\section*{Acknowledgements}\label{sec:ack}
We thank the reviewers and editorial staff for their insightful suggestions.
We thank Gur Lifshitz for useful discussions.
This work is supported in part by NSF awards IIS-1845888, IIS-2021628, IIS-2132972, CCF-2309866, and an Amazon Research Award.

\bibliography{references, mrmp-references, uncertainty_refs}

@inproceedings{DBLP:conf/socs/SternSFK0WLA0KB19,
  author    = {Roni Stern and
               Nathan R. Sturtevant and
               Ariel Felner and
               Sven Koenig and
               Hang Ma and
               Thayne T. Walker and
               Jiaoyang Li and
               Dor Atzmon and
               Liron Cohen and
               T. K. Satish Kumar and
               Roman Bart{\'{a}}k and
               Eli Boyarski},
  title     = {Multi-Agent Pathfinding: Definitions, Variants, and Benchmarks},
  booktitle = {{SOCS}},
  pages     = {151--159},
  publisher = {{AAAI} Press},
  year      = {2019}
}

@article{gue2007puzzle,
  title={Puzzle-based storage systems},
  author={Gue, Kevin R and Kim, Byung Soo},
  journal={Naval Research Logistics (NRL)},
  volume={54},
  number={5},
  pages={556--567},
  year={2007},
  publisher={Wiley Online Library}
}

@inproceedings{train-ordering,
  author       = {Issa K. Hanou and
                  Mathijs Michiel de Weerdt and
                  Jesse Mulderij},
  editor       = {Sven Koenig and
                  Roni Stern and
                  Mauro Vallati},
  title        = {Moving Trains like Pebbles: {A} Feasibility Study on Tree Yards},
  booktitle    = {Proceedings of the Thirty-Third International Conference on Automated
                  Planning and Scheduling, Prague, Czech Republic, July 8-13, 2023},
  pages        = {482--490},
  publisher    = {{AAAI} Press},
  year         = {2023},
  url          = {https://doi.org/10.1609/icaps.v33i1.27228},
  doi          = {10.1609/ICAPS.V33I1.27228},
  timestamp    = {Sun, 06 Oct 2024 20:56:02 +0200},
  biburl       = {https://dblp.org/rec/conf/aips/HanouWM23.bib},
  bibsource    = {dblp computer science bibliography, https://dblp.org}
}

@article{kota2015retrieval,
  title={Retrieval time performance in puzzle-based storage systems},
  author={Kota, Venkat R and Taylor, Don and Gue, Kevin R},
  journal={Journal of Manufacturing Technology Management},
  volume={26},
  number={4},
  pages={582--602},
  year={2015},
  publisher={Emerald Group Publishing Limited}
}

@article{bukchin2022comprehensive,
  title={A comprehensive toolbox for load retrieval in puzzle-based storage systems with simultaneous movements},
  author={Bukchin, Yossi and Raviv, Tal},
  journal={Transportation Research Part B: Methodological},
  volume={166},
  pages={348--373},
  year={2022},
  publisher={Elsevier}
}

@phdthesis{yalcin2017multi,
  title={Multi-agent route planning in grid-based storage systems},
  author={Yalcin, Altan},
  year={2017},
  school={Europa-Universit{\"a}t Viadrina Frankfurt}
}

@inproceedings{carton-seq,
  title={A high-density system for carton sequencing},
  author={Gue, Kevin R and Uludag, Onur and Furmans, Kai},
  booktitle={Proceedings of the international material handling research colloquium},
  year={2012}
}

@article{asrs-survey,
  author       = {Kees Jan Roodbergen and
                  Iris F. A. Vis},
  title        = {A survey of literature on automated storage and retrieval systems},
  journal      = {Eur. J. Oper. Res.},
  volume       = {194},
  number       = {2},
  pages        = {343--362},
  year         = {2009}
}

@article{cross-dock,
  author       = {Wooyeon Yu and
                  Pius J. Egbelu},
  title        = {Scheduling of inbound and outbound trucks in cross docking systems
                  with temporary storage},
  journal      = {Eur. J. Oper. Res.},
  volume       = {184},
  number       = {1},
  pages        = {377--396},
  year         = {2008}
}

@article{wurman2008coordinating,
  title={Coordinating hundreds of cooperative, autonomous vehicles in warehouses},
  author={Wurman, Peter R and D'Andrea, Raffaello and Mountz, Mick},
  journal={AI magazine},
  volume={29},
  number={1},
  pages={9--9},
  year={2008}
}

@misc{auto-port,
  author = {Bela, Victoria},
  year = {2024},
  title = {China stakes global dominance in race to build intelligent ports},
  howpublished = {\url{https://www.scmp.com/news/china/science/article/3250341/china-stakes-global-dominance-race-build-intelligent-ports}},
  note = {Accessed: 2025-01-24}
}

@inproceedings{li2021lifelong,
  title={Lifelong multi-agent path finding in large-scale warehouses},
  author={Li, Jiaoyang and Tinka, Andrew and Kiesel, Scott and Durham, Joseph W and Kumar, TK Satish and Koenig, Sven},
  booktitle={Proceedings of the AAAI Conference on Artificial Intelligence},
  volume={35},
  number={13},
  pages={11272--11281},
  year={2021}
}

@article{rss25,
  author       = {Tzvika Geft and
                  Kostas E. Bekris and
                  Jingjin Yu},
  title        = {Fully Packed and Ready to Go: High-Density, Rearrangement-Free, Grid-Based
                  Storage and Retrieval},
  journal      = {CoRR},
  volume       = {abs/2505.22497},
  year         = {2025}
}

@inproceedings{ICCL24,
  author       = {Max Disselnmeyer and
                  Thomas B{\"{o}}mer and
                  Jakob Pfrommer and
                  Anne Meyer},
  title        = {The Static Buffer Reshuffling and Retrieval Problem for Autonomous
                  Mobile Robots},
  booktitle    = {{ICCL}},
  series       = {Lecture Notes in Computer Science},
  volume       = {15168},
  pages        = {18--33},
  publisher    = {Springer},
  year         = {2024}
}

@article{BRP-hard,
  author       = {Marco Caserta and
                  Silvia Schwarze and
                  Stefan Vo{\ss}},
  title        = {A mathematical formulation and complexity considerations for the blocks
                  relocation problem},
  journal      = {Eur. J. Oper. Res.},
  volume       = {219},
  number       = {1},
  pages        = {96--104},
  year         = {2012}
}

@article{Caserta2012,
  author    = {Marco Caserta and Silvia Schwarze and Stefan Vo{\ss}},
  title     = {A mathematical formulation and complexity considerations for the blocks relocation problem},
  journal   = {European Journal of Operational Research},
  volume    = {219},
  number    = {1},
  pages     = {96--104},
  year      = {2012},
  doi       = {10.1016/j.ejor.2011.12.039}
}

@article{Mirzaei2017,
  author    = {Masoud Mirzaei and Ren\'{e} B. M. De Koster and Nima Zaerpour},
  title     = {Modelling load retrievals in puzzle-based storage systems},
  journal   = {International Journal of Production Research},
  volume    = {55},
  number    = {22},
  pages     = {6423--6435},
  year      = {2017},
  doi       = {10.1080/00207543.2017.1304660}
}

@article{He2023,
  author    = {Jing He and Xinglu Liu and Qiyao Duan and Wai Kin Victor Chan and Mingyao Qi},
  title     = {Reinforcement learning for multi-item retrieval in the puzzle-based storage system},
  journal   = {European Journal of Operational Research},
  volume    = {305},
  number    = {2},
  pages     = {820--837},
  year      = {2023},
  doi       = {10.1016/j.ejor.2022.03.042}
}

@article{boge2020robust,
  author  = {Sven Boge and Marc Goerigk and Sigrid Knust},
  title   = {Robust optimization for premarshalling with uncertain priority classes},
  journal = {European Journal of Operational Research},
  volume  = {287},
  number  = {1},
  pages   = {191--210},
  year    = {2020},
  doi     = {10.1016/j.ejor.2020.04.049}
}

@article{galle2018scrp,
  author  = {Virgile Galle and Vahideh H. Manshadi and Somayeh Borjian Boroujeni and Cynthia Barnhart and Patrick Jaillet},
  title   = {The Stochastic Container Relocation Problem},
  journal = {Transportation Science},
  volume  = {52},
  number  = {5},
  pages   = {1035--1058},
  year    = {2018},
  doi     = {10.1287/trsc.2018.0828}
}

@article{boschma2023adp,
  author  = {Ren\'{e} Boschma and Martijn R.~K. Mes and Leon R. de~Vries},
  title   = {Approximate dynamic programming for container stacking},
  journal = {European Journal of Operational Research},
  volume  = {310},
  number  = {1},
  pages   = {328--342},
  year    = {2023},
  doi     = {10.1016/j.ejor.2023.02.034}
}

\end{document}